\documentclass[twoside]{article}

\usepackage[utf8]{inputenc}
\usepackage{amsfonts}
\usepackage{amsmath,amssymb}
\usepackage{nccmath}
\usepackage{mathrsfs}
\usepackage{graphicx}
\usepackage{subfigure}
\usepackage{tabularx}
\usepackage[all]{xy}
\usepackage{color}
\usepackage{multirow}
\usepackage{booktabs}
\usepackage{enumitem}
\usepackage{hyperref}
\usepackage{caption}

\newtheorem{proposition}{Proposition}
\newtheorem{lemma}[proposition]{Lemma}
\newtheorem{theorem}[proposition]{Theorem}

\newtheorem{definition}[proposition]{Definition}
\newtheorem{remark}[proposition]{Remark}

\newtheorem{example}{Example}

\newenvironment{proof}[1][Proof\ ]{\medskip\noindent{\bf #1}\ }{%
\hfill $\Box$\par\quad\par}

\usepackage{times}
\usepackage{enumitem}
\usepackage{multirow}
\usepackage{setspace} 

\def\mcl#1{\mathcal{#1}}
\def\bracket#1{\left\langle #1\right\rangle}
\def\hil{\mcl{H}}

\def\opn{\operatorname}
\def\mr{\mathrm}

\def\red#1{\textcolor{black}{#1}}

\def\eqref#1{(\ref{#1})}
\newcommand{\D}{\mathcal{D}}

\newcommand{\DD}{\mathcal{D}}
\newcommand{\RR}{\mathbb{R}}
\newcommand{\EE}{\mathbb{E}}
\newcommand{\Dbb}{\mathbb{D}}

\DeclareSymbolFont{EulerExtension}{U}{euex}{m}{n}
\DeclareMathSymbol{\euintop}{\mathop} {EulerExtension}{"52}
\DeclareMathSymbol{\euointop}{\mathop} {EulerExtension}{"48}

\begin{document}

\title{Unified generalization analysis for physics informed neural networks}

\author{Yuka Hashimoto$^{1,2}$\quad Tomoharu Iwata$^{1}$\medskip\\
{\normalsize 1. NTT, Inc.}
{\normalsize 2. RIKEN AIP}}

\date{}

\maketitle

\begin{abstract}
Physics-Informed Neural Networks (PINNs) and their variational counterparts (VPINNs) are neural networks that incorporate physical laws, making them useful for scientific problems. 
Existing generalization analyses for PINNs and VPINNs remain limited, often requiring restrictive assumptions such as stability conditions or linear ellipticity. 
In this paper, we derive generalization bounds for neural networks that involve differentiation with respect to input variables, covering PINNs and VPINNs under a unified framework. 
We apply Taylor expansion to represent nonlinear differential operators as linear operators on a high-dimensional space, enabling the use of Koopman-based analysis and showing that high-rank networks can generalize well even in settings involving differential operators. 
We also show that the nonlinearity of the differential operator exponentially enlarges the bound, highlighting its significant impact on generalization.
\end{abstract}

\section{Introduction}\label{sec:intro}
Physics-Informed Neural Networks (PINNs) are a class of machine learning methods designed to solve differential equations by incorporating physical laws directly into the training process of neural networks~\cite{raissi2019physics}.
The key idea is to train a neural network to approximate the solution of a physical system while ensuring that it satisfies the underlying equations, boundary conditions, and initial conditions.
PINNs have gained attention as part of the broader field of scientific machine learning (SciML), offering a flexible framework for solving forward and inverse problems, including parameter estimation and system identification\red{~\cite{ren2025pinnreview}}. 
They are particularly useful in areas such as fluid dynamics, solid mechanics, and diffusion processes, where traditional numerical methods may be computationally expensive or difficult to apply~\red{\cite{cai2021pinn_fluid_review,botarelli2024pinn_navier_stokes}}.
\red{Variational Physics-Informed Neural Networks (VPINNs) are a class of
PINNs that employs a weak (variational) formulation.}
They are recognized as one of the more accurate and stable approaches in this
family of methods~\cite{bhardwaj2023vpinn}.

Generalization error analysis of neural networks has been a major topic in machine learning for understanding the performance of models.
Generalization error measures the gap between the training error and the true error, and characterizes how well a model fits unseen test data.
A typical approach for deriving generalization bounds is bounding some complexity of networks~\cite{bartlett02,mohri18}.
Classically, the VC-dimension theory~\cite{harvey17,anthony09} has been applied to show a large number of parameters makes the complexity and generalization error large.
For over-parameterized regimes, norm-based bounds have been investigated~\cite{neyshabur15,bartlett17,golowich18,neyshabur18,wei19,wei20,li21,ju22,weinan22}.
These bounds do not depend on the number of parameters explicitly.
Another approach to tackle over-parameterized networks is a compression-based approach~\cite{arora18,Suzuki20}, which explain the generalization of networks by investigating how much the networks can be compressed.
These bounds get small as the ranks of the weight matrices become small.
On the other hand, Hashimoto et al.~\cite{hashimoto2024koopmanbased,hashimoto26} proposed Koopman-based bounds, which describes that high-rank networks can also generalize well.
They represent the composition structure of neural networks using linear operators called Koopman operators and derive a generalization bound by evaluating the norm of Koopman operators.
Using this approach, the determinants of weight matrices appear in the denominator of the bound, which implies high-rank weight matrices such as unitary matrices can also make the generalization error small.

The analyses in the previous paragraph are valid only for \red{standard neural networks without differential operators since the loss function depends on the derivative of neural networks, instead of the original neural networks}. 
Generalization error analysis for PINNs has also been investigated \red{to clarify how the number of collocation points and the architecture of the neural network affect the performance}.
However, the applicability of existing analyses is limited.
For example, the stability assumption~\cite{mishra2022pinn_generalization}, the linear elliptic assumption~\cite{xu2025refined} are needed for deriving the bounds.
There are no unified analysis with general settings.
In particular, generalization error analysis for VPINNs is extremely limited.
Berrone et al.~\cite{berrone2022vpinn_aposteriori} analyzed VPINNs, but the linear elliptic assumption is also needed in this case.

In this paper, we derive a generalization bound for neural networks involving differentiations of neural networks with respect to input variables, \red{including PINNs and VPINNs}.
Our setting is suitable for various types of nonlinear differential equations, \red{which can be approximated by polynomials}.
We apply the Taylor expansion and describe the nonlinear differential operator with a linear operator on a high dimensional space.
This enables us to consider the ``adjoint'' of the operator and separate the model from the differential operator.
\red{Building on this decoupled structure, we extend the Koopman-based framework to settings involving differential operators, thereby establishing new generalization bounds that reveal high-rank networks can generalize well even in the presence of differential operators.}
In addition, the nonlinearity of the differential operator exponentially makes the bound large, which implies that the nonlinearity significantly affects the generalization property.
We primarily focus on VPINNs, but the model that we consider in this paper is interpreted as an approximation of the standard PINNs.
Thus, the results are expected to also be effective for the standard PINNs.
\red{In addition, we can also apply our results to other neural networks involving differentia operators, such as Hamiltonian neural networks~\cite{greydanus2019hamiltonian} and Lagrangian neural networks~\cite{cranmer2020lagrangian}}.
\red{Practically, our bound is useful to design regularization terms.
By learning neural networks so that our bound becomes small, we can obtain high test performances.}
This paper sheds light on the generalization property of neural networks involving derivatives in a unified manner and shows the connection to the Koopman-based analysis.

\paragraph{Notations}
For a Lebesgue measure space $\mcl{X}$ and $\tilde{\mcl{X}}\subseteq \mathbb{R}^d$, $L^2(\mcl{X},\tilde{\mcl{X}})$ is the space of square integrable functions from $\mcl{X}$ to $\tilde{\mcl{X}}$.
For Banach spaces $\mcl{E}$ and $\mcl{F}$, we denote by $\mcl{B}(\mcl{F},\mcl{E})$ the space of bounded linear operators from $\mcl{F}$ to $\mcl{E}$.
If $\DD\in \mcl{B}(\mcl{F},\mcl{E})$, we denote $\DD (u)$ by $\DD u$.

\section{Preliminaries}
We review mathematical notions and existing results related to this paper.

\subsection{Fr\'{e}chet derivative and Taylor's theorem for Banach spaces}

We introduce the derivative of a map from a Banach space to a Banach space.
Let $\mcl{E},\mcl{F}$ be Banach spaces, and let $\mcl{U}\subset\mcl{F}$ be an open subset. 

\begin{definition}
A mapping $\DD:\mcl{U}\to\mcl{E}$ is said to be \emph{Fr\'{e}chet differentiable} at a point $u \in \mcl{U}$ if there exists a bounded linear operator $ \DD'\in \mathcal{B}(\mcl{F}, \mcl{E})$ such that
\begin{align*}
\lim_{\Vert h\Vert_{\mcl{F}} \to 0} \frac{\Vert \DD(u + h) - \DD(u) - \DD'(u)h\Vert_{\mcl{E}}}{\Vert h\Vert_{\mcl{F}}} = 0.
\end{align*}

The operator $\DD'(u)$ is called the \emph{Fr\'{e}chet derivative} of $\DD$ at $u$.
We denote by $\DD^{(i)}$, the $i$th Fr\'{e}chet derivative. 
If $\DD$ is Fr\'{e}chet differentiable at every point in $\mcl{U}$, then $\mcl{D}$ is said to be Fr\'{e}chet differentiable on $\mcl{U}$. 
\end{definition}

We denote by $\mcl{C}^r(\mcl{U}, \mcl{E})$ the space of functions $\DD$ where $\DD,\DD',\ldots,\DD^{(r-1)}$ are Fr\'{e}chet differentiable at each $v \in \mcl{U}$,
$\DD^{(r)}: \mcl{U} \to \mcl{B}(\mcl{F}^{\otimes r}, \mcl{E})$ is continuous. \red{Here, $\mcl{F}^{\otimes r} = \mcl{F}\otimes\cdots\otimes \mcl{F}$ ($r$ copies) and $\otimes$ is the tensor product}.

Similar to the case of the standard derivative, we have the following Taylor's theorem with the F\'{r}echet derivative.
\begin{proposition}
Let $\DD\in\mcl{C}^r(\mcl{U}, \mcl{E})$,
$v \in \mcl{U}$, and $u \in \mcl{F}$ so small that
$\{v + tu \mid 0 \le t \le 1\} \subset \mcl{U}$.
Then, we have
\begin{align*}
  \DD(v+u)
  = \sum_{i=0}^{r} \frac{\DD^{(i)}(v)\, u^{\otimes i}}{i!}
  + S(v,u)\, u^{\otimes r},
\end{align*}
where $S(v,u)
  = \int_0^1 \frac{(1-t)^{r-1}}{(r-1)!}
    (\DD^{(r)}(v+tu) - \DD^{(r)}(v))\mr{d}t$ is the remainder.
\end{proposition}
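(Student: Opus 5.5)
The plan is to reduce the statement to a one-variable Taylor formula along the segment $\{v+tu\}$, prove that formula with the usual integral remainder, and then rewrite the remainder into the stated form. Assume $r\ge 1$, so that $(r-1)!$ makes sense. Define $g:[0,1]\to\mcl{E}$ by $g(t)=\DD(v+tu)$; this is well defined because the segment lies in $\mcl{U}$. By the chain rule for Fr\'{e}chet derivatives, applied inductively, $g$ is $r$ times differentiable with
\begin{align*}
g^{(i)}(t)=\DD^{(i)}(v+tu)\,u^{\otimes i},\qquad 0\le i\le r .
\end{align*}
The inductive step works because $t\mapsto v+tu$ is affine with derivative $u$. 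Moreover, $g^{(r)}$ is continuous on $[0,1]$, since $\DD^{(r)}$ is continuous into $\mcl{B}(\mcl{F}^{\otimes r},\mcl{E})$ and evaluation at the fixed element $u^{\otimes r}$ is a bounded linear map. Hence all integrals below exist as Riemann integrals of continuous $\mcl{E}$-valued functions.

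Next I would prove, by induction on $k=1,\dots,r$,
\begin{align*}
g(1)=\sum_{i=0}^{k-1}\frac{g^{(i)}(0)}{i!}+\int_0^1\frac{(1-t)^{k-1}}{(k-1)!}\,g^{(k)}(t)\,\mr{d}t .
\end{align*}
The case $k=1$ is the fundamental theorem of calculus $g(1)-g(0)=\int_0^1 g'(t)\,\mr{d}t$ for $\mcl{E}$-valued $\mcl{C}^1$ functions. For the step from $k$ to $k+1$ (with $k+1\le r$), integrate by parts, differentiating $g^{(k)}$ and integrating $(1-t)^{k-1}/(k-1)!$ to $-(1-t)^k/k!$. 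The boundary term is $g^{(k)}(0)/k!$, and the new integral has kernel $(1-t)^{k}/k!$ against $g^{(k+1)}$.

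The only genuinely vector-valued point is the validity of the fundamental theorem of calculus and of integration by parts in $\mcl{E}$. I would settle this by testing against every $\varphi\in\mcl{E}^*$. Since $\varphi$ commutes with differentiation and with Riemann integration, each identity reduces to the scalar one for $\varphi\circ g$. The Hahn--Banach theorem then says that two vectors agreeing under all $\varphi$ are equal, so the identity holds in $\mcl{E}$.

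Finally, take $k=r$ and use $\int_0^1 (1-t)^{r-1}/(r-1)!\,\mr{d}t=1/r!$. Adding and subtracting $\DD^{(r)}(v)u^{\otimes r}$ inside the integral gives
\begin{align*}
\int_0^1\frac{(1-t)^{r-1}}{(r-1)!}\DD^{(r)}(v+tu)u^{\otimes r}\,\mr{d}t=\frac{\DD^{(r)}(v)u^{\otimes r}}{r!}+S(v,u)\,u^{\otimes r}.
\end{align*}
Here we pull the fixed vector $u^{\otimes r}$ out of the operator-valued integral, which is legitimate because evaluation at $u^{\otimes r}$ is a bounded linear map from $\mcl{B}(\mcl{F}^{\otimes r},\mcl{E})$ to $\mcl{E}$. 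This yields the stated formula, with the sum now running up to $i=r$.

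I expect the main, though mild, obstacle to be the rigorous justification of the chain rule formula for $g^{(i)}$ and the Banach-valued calculus. Reducing to scalars via $\mcl{E}^*$ handles the latter cleanly.
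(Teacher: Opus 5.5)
Your proof is correct: restricting to $g(t)=\DD(v+tu)$, proving $g(1)=\sum_{i=0}^{r-1}g^{(i)}(0)/i!+\int_0^1\frac{(1-t)^{r-1}}{(r-1)!}g^{(r)}(t)\,\mr{d}t$ (with the vector-valued calculus justified by testing against $\mcl{E}^*$), and then splitting off $\DD^{(r)}(v)u^{\otimes r}/r!$ via $\int_0^1\frac{(1-t)^{r-1}}{(r-1)!}\,\mr{d}t=1/r!$ produces exactly the paper's remainder $S(v,u)\,u^{\otimes r}$. The paper gives no proof of this proposition and quotes it as the standard Banach-space version of Taylor's theorem; your argument is that standard proof. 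You could also skip the induction by applying the scalar integral-remainder formula directly to $\varphi\circ g$, since you already reduce to scalars.
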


\subsection{Koopman operator and Koopman-based Rademacher complexity bound}
Koopman operator is a linear operator that represents the composition of nonlinear functions.
Since neural networks are constructed using compositions, Koopman operators play an essential role in analyzing neural networks.
Let $\mcl{X}\subseteq\mathbb{R}^{d}$ be a Lebesgue measure space.
Koopman operators are defined as follows.
\begin{definition}[Koopman operator]
Let $\tilde{\mcl{X}}\subseteq \mathbb{R}^{d_1}$ and $\mcl{X}\subseteq \mathbb{R}^{d_2}$.
The {\em Koopman operator} $K_{\sigma}$ with respect to a map $\sigma:\tilde{\mcl{X}}\to\mcl{X}$ is a linear operator from $L^2(\mcl{X},\RR^{\tilde{d}})$ to $L^2(\tilde{\mcl{X}},\RR^{\tilde{d}})$ that is defined as $K_{\sigma}h(x)=h(\sigma(x))$ for $h\in L^2(\tilde{\mcl{X}},\RR^{\tilde{d}})$.
\end{definition}

We will consider the Koopman operators with respect to activation functions.
Throughout this paper, we assume these Koopman operators are bounded.
Indeed, we have the following lemma regarding the sufficient condition of the boundedness of Koopman operators.

\begin{lemma}\label{lem:koopman_bounded}
Assume $\sigma:\tilde{\mcl{X}}\to\mcl{X}$ is bijective, $\sigma^{-1}$ is differentiable, and the Jacobian of $\sigma^{-1}$ is bounded in $\mcl{X}$.
Then, we have $\Vert K_{\sigma}\Vert\le \sup_{x\in {\mcl{X}}} \vert J\sigma^{-1}(x)\vert ^{1/2}$, where $J\sigma^{-1}$ is the Jacobian of $\sigma^{-1}$.
In particular, the Koopman operator $K_{\sigma}$ is bounded.
\end{lemma}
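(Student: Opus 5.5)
The plan is to compute $\Vert K_\sigma h\Vert^2$ directly and apply the change of variables $y=\sigma(x)$, which is legitimate because $\sigma$ is a bijection from $\tilde{\mcl{X}}$ onto $\mcl{X}$ with a differentiable inverse. For $h\in L^2(\mcl{X},\RR^{\tilde d})$ we write
\begin{align*}
\Vert K_\sigma h\Vert^2=\int_{\tilde{\mcl{X}}}\Vert h(\sigma(x))\Vert^2\,\mr{d}x .
\end{align*}
We then substitute $x=\sigma^{-1}(y)$ with $y\in\mcl{X}$, so that $\mr{d}x=\vert J\sigma^{-1}(y)\vert\,\mr{d}y$. This gives
\begin{align*}
\Vert K_\sigma h\Vert^2=\int_{\mcl{X}}\Vert h(y)\Vert^2\,\vert J\sigma^{-1}(y)\vert\,\mr{d}y\le \sup_{y\in\mcl{X}}\vert J\sigma^{-1}(y)\vert\,\Vert h\Vert^2 .
\end{align*}
Taking square roots and then the supremum over $\Vert h\Vert=1$ yields $\Vert K_\sigma\Vert\le\sup_{x\in\mcl{X}}\vert J\sigma^{-1}(x)\vert^{1/2}$. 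Since this supremum is finite by assumption, $K_\sigma$ is bounded. This also shows that $K_\sigma$ is well defined on $L^2$ classes: if $h=0$ almost everywhere, the same identity gives $K_\sigma h=0$ almost everywhere.

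The main obstacle is justifying the change-of-variables formula under these weak hypotheses. The classical theorem asks for a $C^1$ diffeomorphism between open sets, whereas here we only have differentiability of $\sigma^{-1}$ and injectivity. I would appeal to the general area formula for injective maps that are differentiable, or at least a.e.\ differentiable with the Lusin $N$-property, applied to $\sigma^{-1}:\mcl{X}\to\tilde{\mcl{X}}$. This gives $\int_{\tilde{\mcl{X}}} g(x)\,\mr{d}x=\int_{\mcl{X}} g(\sigma^{-1}(y))\vert J\sigma^{-1}(y)\vert\,\mr{d}y$ for nonnegative measurable $g$. A bounded Jacobian makes $\sigma^{-1}$ locally Lipschitz on convex pieces, which supplies the $N$-property.

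Applying this formula with $g=\Vert h\circ\sigma\Vert^2$ gives the identity above. If these technicalities are to be avoided, I would state the lemma under the implicit standing assumption that $\sigma^{-1}$ is a $C^1$ map between open sets, and invoke the standard theorem.
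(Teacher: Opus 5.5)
Your change-of-variables computation $\Vert K_\sigma h\Vert^2=\int_{\mcl{X}}\Vert h(y)\Vert^2\vert J\sigma^{-1}(y)\vert\,\mr{d}y\le\sup_{y\in\mcl{X}}\vert J\sigma^{-1}(y)\vert\,\Vert h\Vert^2$ is the standard argument for this lemma, which the paper states in its preliminaries without proof. One correction to your technical aside: the hypothesis bounds the Jacobian \emph{determinant}, and that does not make $\sigma^{-1}$ locally Lipschitz, but you do not need it, because a map differentiable at every point already sends null sets to null sets (this is the lemma behind the change-of-variables theorem for injective, everywhere-differentiable maps, e.g.\ Rudin's \emph{Real and Complex Analysis}, Ch.~7), so your formula holds under the stated hypotheses.
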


The following lemma is regarding the boundedness of well-known elementwise activation functions defined as $\sigma([x_1,\ldots,x_{d}])=[\tilde{\sigma}(x_1),\cdots,\tilde{\sigma}(x_{d})]$ for a map $\tilde{\sigma}:\mathbb{R}\to\mathbb{R}$.
\begin{lemma}\label{lem:sigmoid_tanh}
Let $\tilde{\mcl{X}}=[a_1,b_1]\times\cdots\times [a_d,b_d]\subseteq \mathbb{R}^d$ be a bounded rectangular domain, and let $\mcl{X}=\sigma(\tilde{\mcl{X}})$.
If $\sigma$ is the elementwise hyperbolic tangent defined as $\tilde{\sigma}(x)=\tanh(x)$, then we have $\mcl{X}\subset [-1,1]^d$ and $\Vert K_{\sigma}\Vert\le (\prod_{i=1}^d\sup_{x\in \tilde{\sigma}([a_i,b_i])}1/(1-x^2))^{1/2}$.
If $\sigma$ is the elementwise sigmoid defined as $\tilde{\sigma}(x)=1/(1+\mr{e}^{-x})$, then we have $\mcl{X}\subset [-1,1]^d$ and $\Vert K_{\sigma}\Vert\le (\prod_{i=1}^d\sup_{x\in \tilde{\sigma}([a_i,b_i])}1/(x-x^2))^{1/2}$.
\end{lemma}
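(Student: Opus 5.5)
The plan is to reduce the statement to Lemma~\ref{lem:koopman_bounded}, using that an elementwise activation acts coordinatewise so its Jacobian factorizes. For $\tilde{\sigma}=\tanh$ or the sigmoid, $\tilde{\sigma}$ is continuous and strictly increasing. Hence $\sigma$ is a bijection from the rectangle $\tilde{\mcl{X}}$ onto $\mcl{X}=\sigma(\tilde{\mcl{X}})=\tilde{\sigma}([a_1,b_1])\times\cdots\times\tilde{\sigma}([a_d,b_d])$. The inclusion $\mcl{X}\subset[-1,1]^d$ follows at once, since $\tanh$ takes values in $(-1,1)$ and the sigmoid in $(0,1)$.

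Next I will compute the Jacobian of the inverse. Since $\sigma^{-1}$ is elementwise, its Jacobian matrix at $x\in\mcl{X}$ is diagonal, with entries $(\tilde{\sigma}^{-1})'(x_i)=1/\tilde{\sigma}'(\tilde{\sigma}^{-1}(x_i))$. For $\tanh$, $\tilde{\sigma}'=1-\tanh^2$, so the $i$th entry is $1/(1-x_i^2)$. For the sigmoid, $\tilde{\sigma}'=\tilde{\sigma}(1-\tilde{\sigma})$, so the entry is $1/(x_i-x_i^2)$. Therefore $\vert J\sigma^{-1}(x)\vert=\prod_{i=1}^d 1/(1-x_i^2)$ in the $\tanh$ case, and $\prod_{i=1}^d 1/(x_i-x_i^2)$ in the sigmoid case. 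All factors are positive, because $x_i$ lies in $(-1,1)$ or $(0,1)$ respectively.

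The remaining hypothesis of Lemma~\ref{lem:koopman_bounded} is that this Jacobian is bounded on $\mcl{X}$, and this is the only point needing care. Because $[a_i,b_i]$ is compact, $\tilde{\sigma}([a_i,b_i])$ is a compact interval strictly inside $(-1,1)$, respectively $(0,1)$. So $1/(1-x^2)$ and $1/(x-x^2)$ are continuous on it, and their suprema are finite.

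Since $\mcl{X}$ is a product set, the supremum of the product of the one-variable factors equals the product of the suprema:
\begin{align*}
\sup_{x\in\mcl{X}}\vert J\sigma^{-1}(x)\vert=\prod_{i=1}^d\sup_{x\in\tilde{\sigma}([a_i,b_i])}\frac{1}{1-x^2}
\end{align*}
in the $\tanh$ case, and analogously in the sigmoid case. Taking the square root and invoking Lemma~\ref{lem:koopman_bounded} gives the stated bounds on $\Vert K_{\sigma}\Vert$.
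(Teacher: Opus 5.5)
Your proof is correct, and it is essentially the intended argument. The paper gives no separate proof of this lemma; it presents it, from prior work, as a direct consequence of Lemma~\ref{lem:koopman_bounded}. Your route is exactly how that consequence is obtained: bijectivity from strict monotonicity, the diagonal Jacobian of $\sigma^{-1}$ with entries $1/(1-x_i^2)$ resp.\ $1/(x_i-x_i^2)$, finiteness of the suprema by compactness of $\tilde{\sigma}([a_i,b_i])$ inside $(-1,1)$ resp.\ $(0,1)$, and factorization of the supremum over the product set $\mcl{X}$.
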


Hashimoto et al.~\cite{hashimoto2024koopmanbased,hashimoto26} proposed applying Koopman operators to derive Rademacher complexity bounds of neural networks.
Let $d_l\in\mathbb{N}$, $D>0$, and $\Theta_l=\{W\in\mathbb{R}^{d_{l}\times d_{l-1}}\,\mid\, \vert\det{W_l^*W_l}\vert^{-1/4}\le D\}$. 
\red{Here, $^*$ represents the adjoint.}
Let $\mcl{X}_0\subset\mathbb{R}^{d_0}$ be the input space, $W_l\sigma_{l-1}(W_{l-1}\cdots W_2\sigma_1(W_1\mcl{X}_0))\subseteq \tilde{\mcl{X}_l}\subseteq \mathbb{R}^{d_l}$, and $\sigma_{l}(W_l\cdots W_2\sigma_1(W_1\mcl{X}_0))\subseteq \mcl{X}_l\subseteq \mathbb{R}^{d_l}$ that satisfy $\mu_{\mathbb{R}^{d_l}}(\mcl{X}_l)>0$ and $\mu_{\mathbb{R}^{d_l}}(\tilde{\mcl{X}}_l)>0$.
Here, $\mu_{\mathbb{R}^{d_l}}$ is the Lebesgue measure on $\RR^{d_l}$ and $\sigma_l:\tilde{\mcl{X}}_l\to \mcl{X}_l$ is the activation function at layer $l$.
We assume the Koopman operator $K_{\sigma_l}$ is bounded.
Let $v\in L^2(\tilde{\mcl{X}}_L, \mcl{X}_L)$ be the final nonlinear transformation.
Consider the neural network
\begin{align}
u_{\theta}(x)=v(W_L\sigma_{L-1}(W_{L-1}\sigma_{L-2}(\cdots \sigma_1(W_1x))))\label{eq:nn}
\end{align}
with learnable parameters $\theta=(W_1,\ldots,W_L)$.

Let $\alpha(h)=({\int_{W_l\mcl{X}_{l-1}} |h(x)|^2 d\mu_{\mcl{R}(W_l)}(x)}/{\int_{\tilde{\mcl{X}}_l}\vert h(x)\vert^2\mr{d}\mu_{\mathbb{R}^{d_l}}(x)})^{1/2}$ for $h\in L^2(\tilde{\mcl{X}}_l,\mcl{X}_L)$.
Let $f_l=v\circ W_L\circ \sigma_{L-1}\circ \cdots \circ W_{l+1}\circ \sigma_l$.
They derived the following lemma regarding the $L^2$ norm of $u_{\theta}$.
\begin{lemma}\label{lem:norm_u_theta}
We have 
\begin{align*}
\Vert u_{\theta}\Vert\le \frac{\Vert v\Vert\prod_{l=1}^{L-1}\Vert K_{\sigma_l}\Vert \alpha(f_l)}{\prod_{l=1}^L\vert\det W_l^*W_l\vert^{1/4}}.
\end{align*}
\end{lemma}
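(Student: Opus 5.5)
The plan is to peel the network one layer at a time from the outside, writing $u_\theta$ as the composition $f_l\circ W_l\circ\sigma_{l-1}\circ\cdots$ and accounting for each factor in the $L^2$ norm. The two basic moves are the Koopman operator of the activation and a change of variables for the linear map.

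Write $g_l = f_l\circ W_l$, so that $g_l = f_{l-1}\circ\sigma_{l-1}^{-1}$ in the shifted sense; concretely $f_{l-1} = g_l\circ\sigma_{l-1}$ and $u_\theta = f_0\circ W_1$ with $f_0:=g_1$ when $\sigma_0$ is the identity. Each step of the induction handles one pair $(W_l,\sigma_{l-1})$.

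\textbf{Step 1 (linear layer).} Bound $\Vert h\circ W_l\Vert_{L^2(\mcl{X}_{l-1})}$ in terms of $h$ restricted to the range $\mcl{R}(W_l)$. Since $W_l$ is injective, as $\det W_l^*W_l\neq 0$, the map $x\mapsto W_lx$ is a linear bijection from $\mcl{X}_{l-1}$ onto $W_l\mcl{X}_{l-1}\subset\mcl{R}(W_l)$. The change-of-variables formula with the $d_{l-1}$-dimensional Lebesgue measure $\mu_{\mcl{R}(W_l)}$ on the range has Jacobian factor $\vert\det W_l^*W_l\vert^{1/2}$. This gives
\begin{align*}
\int_{\mcl{X}_{l-1}}\vert h(W_lx)\vert^2\mr{d}x=\frac{1}{\vert\det W_l^*W_l\vert^{1/2}}\int_{W_l\mcl{X}_{l-1}}\vert h(y)\vert^2\mr{d}\mu_{\mcl{R}(W_l)}(y).
\end{align*}
By the definition of $\alpha$, the right-hand side equals $\alpha(h)^2\Vert h\Vert^2_{L^2(\tilde{\mcl{X}}_l)}/\vert\det W_l^*W_l\vert^{1/2}$. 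Taking square roots yields the factor $\vert\det W_l^*W_l\vert^{-1/4}\alpha(h)$.

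\textbf{Step 2 (activation).} For $h = f_l$ on $\tilde{\mcl{X}}_l$, write $f_l = g_{l+1}\circ\sigma_l = K_{\sigma_l}g_{l+1}$, where $g_{l+1} = f_{l+1}\circ W_{l+1}$ and $f_L := v$. Then $\Vert f_l\Vert\le\Vert K_{\sigma_l}\Vert\,\Vert g_{l+1}\Vert_{L^2(\mcl{X}_l)}$.

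\textbf{Step 3 (iterate).} Start from $u_\theta = f_1\circ W_1$. Apply Step 1 with $h=f_1$, then Step 2 to $f_1$. Next apply Step 1 to $g_2=f_2\circ W_2$ with $h=f_2$, and so on, alternating down to $g_L = v\circ W_L$. The last linear layer contributes $\vert\det W_L^*W_L\vert^{-1/4}\Vert v\Vert$. I am assuming, as the stated bound suggests, that $\alpha$ for the last layer is absorbed or taken to be at most one, so no $\alpha(f_L)$ appears in the product. Multiplying everything out gives $\Vert v\Vert\prod_{l=1}^{L-1}\Vert K_{\sigma_l}\Vert\alpha(f_l)/\prod_{l=1}^L\vert\det W_l^*W_l\vert^{1/4}$.

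The main obstacle is Step 1. It needs a careful change of variables for the non-square injective matrix $W_l$, whose range measure has density $\sqrt{\det W_l^*W_l}$. It also needs the domains to be consistent, namely $W_l\mcl{X}_{l-1}\subseteq\tilde{\mcl{X}}_l$ and $\sigma_l(\tilde{\mcl{X}}_l)\subseteq\mcl{X}_l$, so that each restriction is legitimate. The factor $\alpha$ exists precisely to compare the integral on the lower-dimensional range with the full $L^2(\tilde{\mcl{X}}_l)$ norm. Everything else is bookkeeping of the alternating composition.
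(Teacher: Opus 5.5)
Your layer-peeling argument is the Koopman-based argument behind this lemma: $u_\theta=K_{W_1}K_{\sigma_1}K_{W_2}\cdots K_{\sigma_{L-1}}K_{W_L}v$, combined with the area-formula identity $\|h\circ W_l\|_{L^2(\mathcal{X}_{l-1})}=|\det W_l^*W_l|^{-1/4}\alpha(h)\|h\|_{L^2(\tilde{\mathcal{X}}_l)}$ for $h=f_l$ and with $\|f_l\|\le\|K_{\sigma_l}\|\,\|f_{l+1}\circ W_{l+1}\|$. The paper itself only quotes the lemma from Hashimoto et al.\ and gives no proof. Your Steps 1--2 are correct.

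The last step does not go through. Apply your own Step 1 to $h=v$ on $\tilde{\mathcal{X}}_L$: it gives $\|v\circ W_L\|_{L^2(\mathcal{X}_{L-1})}=|\det W_L^*W_L|^{-1/4}\alpha(v)\|v\|$. So the chain actually proves the bound with $\prod_{l=1}^{L}\alpha(f_l)$, where $f_L:=v$.

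The factor $\alpha(v)$ cannot simply be ``absorbed''. It is at most $1$ when $d_{L-1}=d_L$, because then $\mu_{\mathcal{R}(W_L)}$ is Lebesgue measure on $\mathbb{R}^{d_L}$ and $W_L\mathcal{X}_{L-1}\subseteq\tilde{\mathcal{X}}_L$. For a genuinely non-square injective $W_L$, however, the numerator of $\alpha(v)$ is an integral over a Lebesgue-null slice of $\tilde{\mathcal{X}}_L$. It can then be arbitrarily larger than the denominator. For instance, take $L=1$, $\mathcal{X}_0=[0,1]$, $W_1=[1,0]^\top$, $\tilde{\mathcal{X}}_1=[0,1]\times[-\delta,\delta]$ and $|v|\equiv 1$. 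Then $\|u_\theta\|=1$, while the stated right-hand side is $\|v\|/|\det W_1^*W_1|^{1/4}=\sqrt{2\delta}$, which is smaller than $1$ for $\delta<1/2$. So the stated bound genuinely needs either the extra factor $\alpha(v)$ or the hypothesis $\alpha(v)\le 1$ (e.g.\ $W_L$ square). With that change your proof is complete, and you should state the change explicitly rather than assume it.

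A minor point: your preliminary line ``$u_\theta=f_0\circ W_1$ with $f_0:=g_1$'' is inconsistent, since $g_1=f_1\circ W_1$ already equals $u_\theta$. The starting point $u_\theta=f_1\circ W_1$ used in your Step 3 is the correct one.
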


Let $p_{x}\in L^2(\mcl{X}_0,\RR^{d_L})$ for $x_n\in \mcl{X}_0$, and consider a regularized model $V_{\theta}(x)=\int_{\mcl{X}_0}u_{\theta}(y)\cdot p_{x}(y)\mr{d}y$.
Let $D>0$ and $\mcl{V}_{\Theta}=\{V_{\theta}\,\mid\,\theta\in\Theta\}$, where $\Theta=\{(W_1,\ldots,W_L)\,\mid\,W_l\in\Theta_l\}$.
Using Lemma~\ref{lem:norm_u_theta}, they derive the following bound.
\begin{proposition}\label{prop:rademacher_existing}
Assume there exists $F>0$ such that $\Vert p_{x_n}\Vert\le F$.
Then, the empirical Rademacher complexity of $\mcl{V}_{\Theta}$ and samples $x_1,\ldots,x_N\in\mcl{X}_0$ is bounded by
\begin{align}
\sup_{\theta\in\Theta}\frac{F \Vert v\Vert\prod_{l=1}^{L-1}\Vert K_{\sigma_l}\Vert \alpha(f_l)}{\sqrt{N}\prod_{l=1}^L\vert\det W_l^*W_l\vert^{1/4}}.\label{eq:bound_injective}
\end{align}
\end{proposition}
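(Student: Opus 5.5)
The bound follows the standard Rademacher route: use linearity of $V_\theta$ in $u_\theta$, apply Cauchy--Schwarz in $L^2(\mcl{X}_0,\RR^{d_L})$, and then bound $\Vert u_\theta\Vert$ with Lemma~\ref{lem:norm_u_theta}. Let $s_1,\ldots,s_N$ be i.i.d.\ Rademacher variables. The empirical Rademacher complexity is
\begin{align*}
\hat{R}_N(\mcl{V}_\Theta)=\frac{1}{N}\EE\Big[\sup_{\theta\in\Theta}\sum_{n=1}^N s_n V_\theta(x_n)\Big].
\end{align*}
Since $V_\theta(x)=\int_{\mcl{X}_0}u_\theta(y)\cdot p_x(y)\mr{d}y$ is the $L^2$ inner product $\bracket{u_\theta,p_x}$, the sum inside the supremum equals $\bracket{u_\theta,\sum_n s_n p_{x_n}}$. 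This separates the parameter-dependent part from the sample-dependent part.

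Cauchy--Schwarz then gives the pointwise bound $\sum_n s_nV_\theta(x_n)\le \Vert u_\theta\Vert\,\Vert\sum_n s_np_{x_n}\Vert$. The first factor does not depend on $s$, so I bound the expectation by
\begin{align*}
\hat{R}_N(\mcl{V}_\Theta)\le\frac{1}{N}\sup_{\theta\in\Theta}\Vert u_\theta\Vert\;\EE\Big\Vert\sum_n s_np_{x_n}\Big\Vert .
\end{align*}
To control the expectation, I apply Jensen's inequality: $\EE\Vert\sum_n s_np_{x_n}\Vert\le(\EE\Vert\sum_n s_np_{x_n}\Vert^2)^{1/2}$. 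Expanding the square, the cross terms vanish because $\EE[s_ns_m]=\delta_{nm}$. This leaves $(\sum_n\Vert p_{x_n}\Vert^2)^{1/2}\le\sqrt{N}F$, and hence the overall factor $F/\sqrt{N}$.

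Finally, I insert Lemma~\ref{lem:norm_u_theta} for $\Vert u_\theta\Vert$ inside the supremum, which gives exactly \eqref{eq:bound_injective}.

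The only point needing care is the supremum–expectation exchange. It is harmless here because the $\theta$-dependence factors out completely after Cauchy--Schwarz. One should also check that $u_\theta\in L^2$ so that the inner product is well defined; the boundedness of the $K_{\sigma_l}$ and Lemma~\ref{lem:norm_u_theta} guarantee this. No step is a real obstacle; the substantive work sits in Lemma~\ref{lem:norm_u_theta}, which we take as given.
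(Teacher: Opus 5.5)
Your proposal is correct and follows essentially the same route as the paper. It uses the Cauchy--Schwarz, Jensen and Rademacher-orthogonality argument written out in the appendix for Theorem~\ref{thm:linear} (with $p_{x_n}$ in place of $\mathcal{D}^*p_{x_n}$), followed by inserting Lemma~\ref{lem:norm_u_theta} for $\Vert u_\theta\Vert$ inside the supremum; your treatment of the Jensen step as an inequality is slightly more careful than the paper's write-up, which records it as an equality.
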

The right hand side of \eqref{eq:bound_injective} includes the determinants of the weight matrices.
Thus, the bounds can become smaller even if $W_l$ has large singular values.
This result with Koopman operators describes why networks with high-rank weight matrices generalize well.

\section{Problem setting}
We consider a model with differentiations of a neural network with respect to inputs.
Typical examples are PINNs, Hamiltonian neural networks, and Lagrangian neural networks.
Let $d,\tilde{d}\in\mathbb{N}$, $\mcl{X} \subset \mathbb{R}^d$ be a bounded domain, $C > 0$, and $\mathcal{F} := \{ u \in H^s(\mcl{X}, \RR^{\tilde{d}}) \,\mid\, \|u\|_{H^s(\mcl{X}, \RR^{\tilde{d}})} \le C \}$.
\red{Here, $H^s(\mcl{X},\RR^{\tilde{d}})$ is the Sobolev space of order $s$ from $\mcl{X}$ to $\RR^{\tilde{d}}$.}
Let $\mathcal{D} :  \mcl{F}\to L^2(\mcl{X}, \RR^{\tilde{d}})$ be a differential operator of order $s$.
Note that $\mcl{D}$ can be nonlinear.
Let $\Theta$ be a parameter space and consider a neural network $u_{\theta}\in \mcl{F}$ parametrized by $\theta\in\Theta$.
We denote by $\mcl{U}_{\Theta}$ the class of neural networks parametrized by $\Theta$.
Consider the differentiation of the neural network $\mcl{D}(u_{\theta})$ and its variational form:
\begin{align*}
V_{\theta}(x_n)=\int_\mcl{X} \D(u_{\theta})(y)\cdot p_{x}(y)\mr{d}y=\langle p_{x_n}, \mathcal{D}(u_{\theta}) \rangle_{L^2(\mcl{X})}\quad (n=1,\ldots,N),
\end{align*}
where $x_1, \ldots, x_N\in\mcl{X}$ are collocation points and for $x_n$, $p_{x_n} \in H^s(\mcl{X},\RR^{\tilde{d}})$ is a \red{nonzero} test function satisfying
$p_{x_n}^{(r)}(x) = 0$ for $x \in \partial\mcl{X}$, $n=1,\ldots,N$, $r=1,\ldots,s$ and there exists $B>0$ such that for $n=1,\ldots,N$, $\Vert p_{x_n}\Vert\le B$.
We denote $\mcl{V}_{\Theta}=\{V_{\theta}\,\mid\,u_{\theta}\in\mcl{U}_{\Theta}\}$, the class of regularized models. 
Consider a $L$-Lipschitz continuous loss function $\mcl{L}_n:\RR\to\RR_+$ for $n=1,\ldots,N$ and minimize the mean loss:
\begin{align}
\frac1N\sum_{n=1}^N \mcl{L}_n\bigg(\red{V_{\theta}(x_n)}\bigg).\label{eq:vpinn_res}
\end{align}

Typical examples of the test function $q_{x_n}$ and the loss function $\mcl{L}_n$ are as follows.
\begin{example}\label{ex:bump_function}
Let $q_{x_n}(y)=\mr{e}^{-1/(1-c^2\Vert y\Vert^2)}$ for $\Vert y\Vert\le 1/c$ and $q_{x_n}(y)=0$ for $\Vert y\Vert >1/c$.
Let $\tilde{d}=1$ and $p_{x_n}(y)=q_n(y)/(\int_{\mcl{X}}q_n(z)\mr{d}z$).
Then, $p_{x_n}$ is infinitely differentiable.
Consider a PDE $\mcl{D}(u)=f$ for $f\in L^2(\mcl{X},\RR^{\tilde{d}})$.
Let $\mcl{L}_n(z)=\vert z-\int_{\mcl{X}}f(y)p_{x_n}(y)\mr{d}y\vert^2$.
Assume for some $D>0$ and $E>0$, $\vert \int_\mcl{X} \D(u_{\theta})(y)p_{x_n}(y)\mr{d}y\vert\le D$ and $\vert \int_{\mcl{X}}f(y)p_{x_n}(y)\mr{d}y\vert\le E$.
Then, $\mcl{L}_n$ is $2(D+E)$-Lipchitz.
Note that since $p_{x_n}$ goes to the delta function as $c\to\infty$, we have 
\begin{align}
\lim_{c\to \infty}\int_\mcl{X} \D(u_{\theta})(y)p_{x_n}(y)\mr{d}y=\D(u_{\theta})(x_n).\label{eq:vpinn_pinn}
\end{align}
In this case, Eq.~\eqref{eq:vpinn_res} is the loss for the \red{variational form such as VPINNs, but as $c\to \infty$, it becomes the loss for the strong form, such as PINNs for the interior collocation point $x_n$}. 
\end{example}

\section{Generalization bounds}
Our goal is to derive Koopman-based generalization bounds of the \red{variational form} $V_{\theta}$.
Since we have the differential operator $\mcl{D}$ in $V_{\theta}$ and \red{the loss function depends on the derivative of the neural network, not the neural network itself}, the derivation is not straightforward.
This is the main difference between the existing setting in Proposition~\ref{prop:rademacher_existing} and our setting.
To tackle this situation, we first consider a simple case with linear differential operator $\DD$.
In this case, we can consider the adjoint of $\DD$.
We then consider nonlinear cases.
Applying the Taylor expansion, we can regard $\DD$ as a linear operator on a higher dimensional space, which enables us to consider a similar notion of the adjoint of $\DD$.

Since the loss function $\mcl{L}_n$ is Lipchitz continuous, we can derive a generalization bound by deriving a Rademacher complexity bound for the function class $\{\int_\mcl{X} \D(u_{\theta})(y)\cdot p_{x_n}(y)\mr{d}y\,\mid\,u_{\theta}\in \mcl{U}_{\Theta}\}$~\cite[Theorem 3.5]{mohri18}.
Let $\Omega$ be a probability space equipped with a probability measure $P$.
Let  $\epsilon_1,\ldots,\epsilon_N:\Omega\to\mathbb{C}$ be i.i.d. Rademacher variables (random variables following the uniform distribution on $\{-1,1\}$).
For a measurable function $\epsilon:\Omega\to\mathbb{C}$, we denote by $\mr{E}[\epsilon]$ the integral $\int_{\Omega}\epsilon(\omega)\mr{d}P(\omega)$.
The empirical Rademacher complexity $\hat{R}(\mcl{U},x_1,\ldots,x_N)$ of a function class $\mcl{U}$ is defined as $\hat{R}(\mcl{U},x_1,\ldots,x_N)=\mr{E}[\sup_{u\in \mcl{U}} \sum_{n=1}^N u(x_n) \epsilon_n]/N$.

\subsection{Linear case}
Assume $\mathcal{D}$ is \red{bounded} and linear.
Then, by the Cauchy Schwarz inequality, we have
\begin{align*}
  \sum_{n=1}^{N} \langle p_{x_n}, \mathcal{D}(u_{\theta}) \rangle
  = \bigg\langle \sum_{n=1}^{N} \mathcal{D}^* p_{x_n}, u_{\theta} \bigg\rangle
  \le \|u_{\theta}\|\,\bigg\|\sum_{n=1}^{N} \mathcal{D}^* p_{x_n}\bigg\|,
\end{align*}
where $^*$ means the adjoint.
This transformation enables us to force the action of $\D$ to $p_{x_n}$, instead of $u_{\theta}$, and to apply the same analysis as the neural network $u_{\theta}$ without $\D$. 
Indeed, we have the following theorem.
The proof is documented in Appendix~\ref{ap:proofs}

\begin{theorem}\label{thm:linear}
Let $\Dbb_{x_n}=\D^*p_{x_n}$.
Assume there exists $F>0$ such that for any $n=1,\ldots,N$, $\red{\Vert \Dbb_{x_n}\Vert} \le F$.
Then, we have 
\begin{align*}
\hat{R}(\mcl{V}_{\Theta},x_1,\ldots,x_N)\le \frac{F}{\sqrt{N}}\sup_{u_{\theta}\in\mcl{U}_{\Theta}}
      \|u_{\theta}\|. 
\end{align*}
\end{theorem}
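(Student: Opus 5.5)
The plan is to move $\D$ onto the test functions via its adjoint and then apply the standard Cauchy--Schwarz plus Jensen argument for linear function classes. Since $\D$ is bounded and linear from $\mcl{F}\subseteq H^s(\mcl{X},\RR^{\tilde{d}})$ to $L^2(\mcl{X},\RR^{\tilde{d}})$, its adjoint $\D^*$ exists. I will take $u_\theta$ to sit in the Hilbert space where $\D^*$ acts, and take $\|u_\theta\|$ to be the norm of that space, consistent with the displayed inequality preceding the theorem.

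The first step is to write, for each fixed realization of the Rademacher variables,
\begin{align*}
\sum_{n=1}^N \epsilon_n V_\theta(x_n)=\sum_{n=1}^N\epsilon_n\langle p_{x_n},\D u_\theta\rangle=\bigg\langle \sum_{n=1}^N\epsilon_n\Dbb_{x_n},u_\theta\bigg\rangle .
\end{align*}
This uses only linearity of the inner product and the definition of $\D^*$. Because the $\epsilon_n$ are real-valued ($\pm1$), no conjugation issues arise.

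The second step applies Cauchy--Schwarz to get
\begin{align*}
\sum_{n=1}^N \epsilon_n V_\theta(x_n)\le \|u_\theta\|\,\bigg\|\sum_{n=1}^N\epsilon_n\Dbb_{x_n}\bigg\| .
\end{align*}
Taking the supremum over $u_\theta\in\mcl{U}_\Theta$ gives
\begin{align*}
\sup_{u_\theta\in\mcl{U}_\Theta}\sum_{n=1}^N \epsilon_n V_\theta(x_n)\le \sup_{u_\theta\in\mcl{U}_\Theta}\|u_\theta\|\cdot\bigg\|\sum_{n=1}^N\epsilon_n\Dbb_{x_n}\bigg\| .
\end{align*}
The key point is that the supremum now factors out, so the second factor no longer depends on $\theta$.

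The third step takes expectations and uses Jensen's inequality:
\begin{align*}
\mr{E}\bigg\|\sum_{n=1}^N\epsilon_n\Dbb_{x_n}\bigg\|\le\bigg(\mr{E}\bigg\|\sum_{n=1}^N\epsilon_n\Dbb_{x_n}\bigg\|^2\bigg)^{1/2}=\bigg(\sum_{n=1}^N\|\Dbb_{x_n}\|^2\bigg)^{1/2}\le\sqrt{N}F .
\end{align*}
The equality holds because $\mr{E}[\epsilon_n\epsilon_m]=\delta_{nm}$ kills the cross terms. Dividing by $N$ then yields the bound $F\sup_{u_\theta\in\mcl{U}_\Theta}\|u_\theta\|/\sqrt{N}$.

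The calculation itself is routine. The only real care needed is in making precise the space on which $\D^*$ acts. If $\D$ is viewed as an operator from $H^s$ to $L^2$, then $\D^*p_{x_n}$ lies in $H^s$, and the resulting norm of $u_\theta$ is the $H^s$ norm. If the intended norm is $L^2$, then one instead uses the formal adjoint, obtained by integrating by parts. Those integrations by parts are justified by the vanishing of the derivatives of $p_{x_n}$ on $\partial\mcl{X}$. Under that reading, $\Dbb_{x_n}\in L^2$ and the pairing is the $L^2$ one. Either way, the argument above goes through unchanged once the pairing is fixed.
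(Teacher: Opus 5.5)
Your proposal is correct and follows essentially the same route as the paper's proof: move $\mathcal{D}$ onto the test functions via the adjoint, apply Cauchy--Schwarz so that the supremum over $u_\theta$ factors out, then use Jensen's inequality and $\mathrm{E}[\epsilon_n\epsilon_m]=\delta_{nm}$ to obtain $\sqrt{N}F$. Your Jensen step is correctly written as an inequality, whereas the paper's displayed chain writes that step as an equality.
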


\subsection{Polynomial case}\label{subsec:poly}
When $\DD$ is nonlinear, we do not have $\DD^*$.
However, if $\DD$ is approximated by a polynomial, then we can construct a linear operator on a higher dimensional space.
Then, we have the adjoint of the linear operator.

Let $\tilde{\DD}\in\mcl{C}^r(\mcl{F},L^2(\mcl{X},\mathbb{R}^d))$ be a differential operator whose remainder $S(0,u_{\theta})$ of $r$-order Taylor approximation at $0$ is bounded as $\Vert S(0,u_{\theta})\Vert\le \epsilon$ for some $\epsilon>0$.
Assume $\DD$ is in the following form:
\begin{align}
\DD(u_{\theta})=g\circ u_{\theta} + \tilde{\DD}(u_{\theta})\label{eq:D_poly}
\end{align}
with $g:\RR^{\tilde{d}}\to \RR^{\tilde{d}}$.
\red{In many real-world physical systems, such as fluid flows and diffusion-dominated problems}, terms in $\DD$ with differentiations are polynomials and non-polynomial terms only depend on the solution $u$, not the derivatives of $u$.
In this case, we can regard $\DD$ as a linear operator on a higher dimensional space and we have
\begin{align*}
  &\sum_{n=1}^{N} \langle p_{x_n}, \mathcal{D}(u_{\theta}) \rangle
  =\sum_{n=1}^{N}\bigg(\bracket{p_{x_n},g\circ u_{\theta}}+\sum_{i=0}^{r}\frac{1}{i!}
       \langle p_{x_n},\tilde{\DD}^{(i)}(0)u_{\theta}^{\otimes i}
       \rangle
     + \langle p_{x_n},S(0,u_{\theta})u_{\theta}^{\otimes r}
       \rangle\bigg)\\
    &\qquad=\bigg\langle \sum_{n=1}^{N}\Dbb_{x_n},
      \bigoplus_{i=0}^{r} u_{\theta}^{\otimes i} \oplus u_{\theta}^{\otimes r}\oplus (g\circ u_{\theta})
    \bigg\rangle
  \le \bigg\|\bigoplus_{i=0}^{r} u_{\theta}^{\otimes i} \oplus u_{\theta}^{\otimes r}\oplus (g\circ u_{\theta})\bigg\|\,\bigg\|\sum_{n=1}^{N} \Dbb_{x_n}\bigg\|,
\end{align*}
where $\Dbb_{x_n}=\bigoplus_{i=0}^{r} \frac{1}{i!}\DD^{(i)}(0)^{*}p_{x_n}
    \oplus S(0,u_{\theta})^{*}p_{x_n}\oplus p_{x_n}$.
Using this perspective, we have the following theorem in the same manner as Theorem~\ref{thm:linear}.
\begin{theorem}\label{thm:poly}
Let $\Dbb_{x_n}=\bigoplus_{i=0}^{r} \frac{1}{i!}\DD^{(i)}(0)^{*}p_{x_n}
    \oplus S(0,u_{\theta})^{*}p_{x_n}\oplus p_{x_n}$.
\red{Assume there exists $\epsilon>0$ such that $\Vert S(0,u_{\theta})\Vert\le \epsilon$.}
Assume, in addition, there exists $F>0$ such that for any $n=1,\ldots,N$, $\Vert\Dbb_{x_n}\Vert\le F$.
Then, we have 
\begin{align*}
\hat{R}(\mcl{V}_{\Theta},x_1,\ldots,x_N)&\le \frac{F}{\sqrt{N}}\sup_{u_{\theta}\in\mcl{U}_{\Theta}}
      \bigg\|\bigoplus_{i=0}^{r} u_{\theta}^{\otimes i} \oplus u_{\theta}^{\otimes r}\oplus (g\circ u_{\theta})\bigg\|\\
&=\frac{F}{\sqrt{N}}\sup_{u_{\theta}\in\mcl{U}_{\Theta}}
      \bigg(\sum_{i=0}^r\| u_{\theta}\|^{2i} +\| u_{\theta}\|^{2r}+ \|g\circ u_{\theta}\|^2\bigg)^{1/2}.
\end{align*}
\end{theorem}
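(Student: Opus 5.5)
The plan is to follow the proof of Theorem~\ref{thm:linear}, now working in the direct-sum Hilbert space $\mcl{H}=\bigoplus_{i=0}^{r}L^2(\mcl{X},\RR^{\tilde d})^{\otimes i}\oplus L^2(\mcl{X},\RR^{\tilde d})^{\otimes r}\oplus L^2(\mcl{X},\RR^{\tilde d})$. Here $L^2(\mcl{X},\RR^{\tilde d})^{\otimes i}$ denotes the Hilbert tensor product, and the summand for $i=0$ is $\RR$ with $u^{\otimes 0}=1$. Set $\Phi(u_\theta)=\bigoplus_{i=0}^{r}u_\theta^{\otimes i}\oplus u_\theta^{\otimes r}\oplus(g\circ u_\theta)\in\mcl{H}$.

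\textbf{Step 1 (linearisation).} Start from the decomposition \eqref{eq:D_poly} and Taylor's theorem at $v=0$. This is legitimate because $\mcl{F}$ is a ball around $0$, so the segment condition holds. It gives
\[
\bracket{p_{x_n},\DD(u_\theta)}=\bracket{\Dbb_{x_n},\Phi(u_\theta)}_{\mcl{H}}
\]
for each $n$. The Taylor coefficients $\tilde\DD^{(i)}(0)$ and the remainder $S(0,u_\theta)$ are bounded operators on tensor powers, so their adjoints exist; the assumption $\Vert S(0,u_\theta)\Vert\le\epsilon$ guarantees this for the remainder.

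\textbf{Step 2 (Rademacher complexity).} By definition,
\[
N\hat R=\mr{E}\Big[\sup_\theta\sum_n\epsilon_n\bracket{\Dbb_{x_n},\Phi(u_\theta)}\Big]
=\mr{E}\Big[\sup_\theta\Big\langle\sum_n\epsilon_n\Dbb_{x_n},\Phi(u_\theta)\Big\rangle\Big].
\]
Cauchy--Schwarz bounds the inner product by $\Vert\Phi(u_\theta)\Vert\,\Vert\sum_n\epsilon_n\Dbb_{x_n}\Vert$. For the second factor, Jensen's inequality and independence of the Rademacher variables give
\[
\mr{E}\Big\Vert\sum_n\epsilon_n\Dbb_{x_n}\Big\Vert\le\Big(\sum_n\Vert\Dbb_{x_n}\Vert^2\Big)^{1/2}\le\sqrt N F,
\]
since the cross terms have zero mean. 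Dividing by $N$ yields the first inequality of the theorem.

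\textbf{Main obstacle: dependence of $\Dbb_{x_n}$ on $\theta$.} The component $S(0,u_\theta)^*p_{x_n}$ depends on $\theta$, so $\sum_n\epsilon_n\Dbb_{x_n}$ cannot be pulled out of the supremum as in the linear case. I would handle it in one of two ways.
\begin{itemize}
\item Bound the full product pointwise in $\theta$. Write the $S$-component of $\sum_n\epsilon_n\Dbb_{x_n}$ as $S(0,u_\theta)^*\sum_n\epsilon_np_{x_n}$ and use $\Vert S(0,u_\theta)^*\Vert\le\epsilon$ to obtain a $\theta$-free majorant. Then run the Jensen step on the majorant vector $\big(\bigoplus_i\frac1{i!}\DD^{(i)}(0)^*\sum_n\epsilon_np_{x_n},\ \epsilon\sum_n\epsilon_np_{x_n},\ \sum_n\epsilon_np_{x_n}\big)$. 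This is where I expect the argument to need care: the constant appearing is then really $\sup_\theta$ of the component norms, with the $S$-part entering through $\epsilon B$.
\item Interpret the hypothesis $\Vert\Dbb_{x_n}\Vert\le F$ as uniform in $\theta$, and bound
\[
\sup_\theta\Big\langle\sum_n\epsilon_n\Dbb_{x_n},\Phi(u_\theta)\Big\rangle\le\sup_\theta\Vert\Phi(u_\theta)\Vert\cdot\sup_\theta\Big\Vert\sum_n\epsilon_n\Dbb_{x_n}\Big\Vert.
\]
Controlling the expectation of the second factor then reduces to the previous bullet.
\end{itemize}

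\textbf{Step 3 (the equality).} The norm in a Hilbert direct sum is the square root of the sum of squared component norms. Moreover, the Hilbert tensor norm is multiplicative, $\Vert u^{\otimes i}\Vert=\Vert u\Vert^i$, with $\Vert u^{\otimes 0}\Vert=1=\Vert u\Vert^0$. Together these give the stated closed form.
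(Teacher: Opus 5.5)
Your proposal is correct and follows the paper's argument, which the paper gives only as ``in the same manner as Theorem~\ref{thm:linear}'': write $\sum_n\epsilon_n\bracket{p_{x_n},\DD(u_\theta)}$ as a pairing of $\sum_n\epsilon_n\Dbb_{x_n}$ with $\bigoplus_{i=0}^r u_\theta^{\otimes i}\oplus u_\theta^{\otimes r}\oplus(g\circ u_\theta)$, then apply Cauchy--Schwarz, Jensen, $\mr{E}[\epsilon_n\epsilon_m]=\delta_{nm}$, and multiplicativity of the tensor norm. Your first bullet also correctly handles a point the paper skips, namely that $S(0,u_\theta)^*p_{x_n}$ depends on $\theta$ so $\sum_n\epsilon_n\Dbb_{x_n}$ cannot be pulled out of the supremum, and it proves the bound with $F$ equal to the $\theta$-free majorant $\max_n\bigl(\sum_{i=0}^r\Vert\frac{1}{i!}\DD^{(i)}(0)^*p_{x_n}\Vert^2+(1+\epsilon^2)\Vert p_{x_n}\Vert^2\bigr)^{1/2}$ from Remark~\ref{rmk:Dx_bound}, which is how the paper intends $F$ to be chosen.
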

The order $r$ represents the nonlinearlity of $\DD$.
Thus, Theorem~\ref{thm:poly} shows that as the nonlinearlity of $\DD$ grows, the Rademacher complexity becomes large.

\begin{remark}\label{rmk:Dx_bound}
Since we have 
\begin{align*}
\bracket{\Dbb_{x_n},\Dbb_{x_n}}&=\sum_{i=0}^r\bigg\Vert \frac{1}{i!}\DD^{(i)}(0)^{*}p_{x_n}\bigg\Vert^2 + \Vert S(0,u_{\theta})^{*}p_{x_n}\Vert^2 + \Vert p_{x_n}\Vert^2 \\
&\le \sum_{i=0}^r\bigg\Vert \frac{1}{i!}\DD^{(i)}(0)^{*}p_{x_n}\bigg\Vert^2 + (\epsilon^2 + 1)\Vert p_{x_n}\Vert^2, 
\end{align*}
the constant $F$ does not depend on $\theta$.
\end{remark}

\begin{example}
Let $d=\tilde{d}$. Consider the nonlinear convection operator
\begin{align*}
  \DD(u) = (u\cdot\nabla)u
        = \sum_{k=1}^{d} u_k\,\partial_k u_j
        = [1,\ldots,1]
          \begin{bmatrix}\partial_{t_1} & & \\ & \ddots &\\
          & & \partial_{t_d}\end{bmatrix}
          \begin{bmatrix}
            \tilde{u}_{11} & \cdots& \tilde{u}_{1d} \\
            \vdots & \ddots &  \vdots\\
            \tilde{u}_{1d} & \cdots & \tilde{u}_{dd}
          \end{bmatrix},
\end{align*}
where $\tilde{u}_{ij}(s,t) = u_i(s)\,u_j(t) = (u_i\otimes u_j)(s,t)$
for $s=[s_1,\ldots,s_d],[t_1,\ldots,t_d]\in\RR^{d}$. 
Note that this operator is the second order polynomial with respect to $u$.
This operator appears, for example, in the Navier--Stokes equation.

\noindent The adjoint $\DD''(0)^{*}$ acts on $v$ as
\begin{align*}
  \DD''(0)^{*}v
  &=
  \begin{bmatrix}\partial_{t_1}^{*} & & \\ & \ddots & \\ & & \partial^*_{t_d}\end{bmatrix}
  \begin{bmatrix}1\\\vdots\\ 1\end{bmatrix}
  [v_1 \;\cdots\; v_d]
  =
  \begin{bmatrix}
    \partial_1^{*}v_1 & \cdots & \partial_1^{*}v_d \\
    \vdots & \ddots & \vdots\\
    \partial_d^{*}v_1 & \cdots & \partial_d^{*}v_d 
  \end{bmatrix}
  =
  -\begin{bmatrix}
    \partial_1v_1 & \cdots & \partial_1v_d \\
    \vdots & \ddots & \vdots\\
    \partial_dv_1 & \cdots & \partial_dv_d 
  \end{bmatrix}.
\end{align*}

Therefore, we have
\begin{align*}
  \|\DD''(0)^{*}v\|^2 = \sum_{i,j=1}^{d}\|\partial_i v_j\|^2,
\end{align*}
which shows that $\|\DD''(0)^{*}p_{x_n}\|$ does not depend on $x_n$ if the norm of the derivative of $p_{x_n}$ does not depend on $x_n$.
For example, if $p_{x_n}$ is shift-invariant, i.e., $p_{x_n}(y)=\tilde{p}(y-x_n)$ for some $\tilde{p}$, then the norm of the derivative of $p_{x_n}$ does not depend on $x_n$.
As a result, by Remark~\ref{rmk:Dx_bound}, the assumption $\Vert \DD_{x_n}\Vert<F$ in Theorem~\ref{thm:poly} is satisfied. 
Note that since $\DD$ is the second-order polynomial, the order $r$ in Theorem~\ref{thm:poly} is $r=2$ in this case.
\end{example}

\subsection{Nonlinear transformation of a linear differential operator}
Another important nonlinear form is the nonlinear transformation of a linear differential operator.
Let $\DD:\mcl{F}\to L^2(\mcl{X},\RR^{\tilde{d}})$ be linear.
Assume $\DD$ is in the following form:
\begin{align}
\DD(u_{\theta})=g\circ \tilde{\DD}u_{\theta}\label{eq:D_nonlinear_liner}
\end{align}
with $\tilde{\DD}:\mcl{F}\to L^2(\Omega,\RR^{\hat{d}})$ being bounded linear and $g:\RR^{\hat{d}}\to\RR^{\tilde{d}}$.
Typical examples are the Hamilton--Jacobi equations~\cite{CrandallLions1983} and the Monge--Amp\`{e}re equations~\cite{Gong2020}.
Assume there exists $\epsilon>0$, and assume $g\in \mcl{C}^r(\RR^{\hat{d}},\RR^{\tilde{d}})$ whose remainder $S(0,\tilde{\DD}u_{\theta}(x))$ of the $r$-order Taylor approximation at $0$ is bounded as $\Vert S(0,\tilde{\DD}u_{\theta}(x))\Vert\le \epsilon$ for any $x\in \mcl{X}$.
In this case, we can approximate $\DD$ as a polynomial of $\tilde{\DD}u_{\theta}$, which is represented as a linear operator on a higher dimensional space.
We have
\begin{align*}
  \sum_{n=1}^{N} \langle p_{x_n}, \mathcal{D}(u_{\theta}) \rangle
  &=\sum_{n=1}^{N}\bigg(\sum_{i=0}^{r}\frac{1}{i!}
       \langle p_{x_n},g^{(i)}(0)(\tilde{\DD}u_{\theta}(\cdot))^{\otimes i}
       \rangle
     + \langle p_{x_n},S(0,\tilde{\DD}u_{\theta}(\cdot))(\tilde{\DD}u_{\theta}(\cdot))^{\otimes r}
       \rangle\bigg)\\
    &=\sum_{n=1}^{N}\bigg(\sum_{i=0}^{r}
       \frac{1}{i!}\langle p_{x_n},g^{(i)}(0)P^i\tilde{\DD}^{\otimes i}u_{\theta}^{\otimes i}
       \rangle
     + \langle S(0,\tilde{\DD}u_{\theta}(\cdot))^*p_{x_n}(\cdot),P^r\tilde{\DD}^{\otimes r}u_{\theta}^{\otimes r}
       \rangle\bigg)\\
   &=\bigg\langle \sum_{n=1}^{N}\Dbb_{x_n},\bigoplus_{i=0}^{r}u_{\theta}^{\otimes i}\oplus u_{\theta}^{\otimes r}\bigg\rangle
  \le \bigg\|\bigoplus_{i=0}^{r} u_{\theta}^{\otimes i} \oplus u_{\theta}^{\otimes r}\bigg\|\,\bigg\|\sum_{n=1}^{N} \Dbb_{x_n}\bigg\|, 
\end{align*}
where $\Dbb_{x_n}=\bigotimes_{i=0}^r1/(i!)(\tilde{\DD}^{\otimes i})^*(P^i)^*g^{(i)}(0)^*p_{x_n}\oplus (\tilde{\DD}^{\otimes r})^*(P^r)^*S(0,\tilde{\DD}u_{\theta}(\cdot))^*p_{x_n}(\cdot)$, $P^i:L^2(\mcl{X},\RR^{\tilde{d}})^{\otimes i}\to L^2(\mcl{X},(\RR^{\tilde{d}})^{\otimes i})$ is defined as $P^iu^{\otimes i}(x)=u^{\otimes i}(x,\ldots,x)=u(x)^{\otimes i}$, and for functions $u$ and $v$ of $x$, $u(\cdot)v(\cdot)$ means $x\mapsto u(x)v(x)$. 
In the same manner as Theorem~\ref{thm:linear}, we have the following theorem.
\begin{theorem}\label{thm:nonlinear-linear}
Let $\Dbb_{x_n}=\bigotimes_{i=0}^r1/(i!)(\tilde{\DD}^{\otimes i})^*(P^i)^*g^{(i)}(0)^*p_{x_n}\oplus (\tilde{\DD}^{\otimes r})^*(P^r)^*S(0,\tilde{\DD}u_{\theta}(\cdot))^*p_{x_n}(\cdot)$.
\red{Assume there exists $\epsilon>0$ such that $\Vert S(0,\tilde{\mcl{D}}u_{\theta})\Vert\le \epsilon$.}
Assume, in addition, there exists $F>0$ such that for any $n=1,\ldots,N$, $\Vert \Dbb_{x_n}\Vert\le F$.
Then, we have 
\begin{align*}
\hat{R}(\mcl{V}_{\Theta},x_1,\ldots,x_N)&\le \frac{F}{\sqrt{N}}\sup_{u_{\theta}\in\mcl{U}_{\Theta}}
      \bigg\|\bigoplus_{i=0}^{r} u_{\theta}^{\otimes i} \oplus u_{\theta}^{\otimes r}\bigg\|\\
&=\frac{F}{\sqrt{N}}\sup_{u_{\theta}\in\mcl{U}_{\Theta}}
      \bigg(\sum_{i=0}^r\| u_{\theta}\|^{2i} +\| u_{\theta}\|^{2r}\bigg)^{1/2}.
\end{align*}
\end{theorem}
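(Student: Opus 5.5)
The plan follows the proof of Theorem~\ref{thm:linear}, using the identity derived just before the statement. Write $U_\theta=\bigoplus_{i=0}^{r}u_\theta^{\otimes i}\oplus u_\theta^{\otimes r}$, an element of the Hilbert direct sum $\bigoplus_{i=0}^{r}L^2(\mcl{X}^i,(\RR^{\tilde d})^{\otimes i})\oplus L^2(\mcl{X}^r,(\RR^{\tilde d})^{\otimes r})$. By the computation preceding the theorem, using Taylor's theorem for $g$ at $0$ pointwise in $x$ and $P^i\tilde{\DD}^{\otimes i}u_\theta^{\otimes i}=(\tilde{\DD}u_\theta(\cdot))^{\otimes i}$, we have $V_\theta(x_n)=\langle p_{x_n},\DD(u_\theta)\rangle=\langle \Dbb_{x_n},U_\theta\rangle$ for each $n$. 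I will take this as the starting point, reading the first $\bigotimes$ in $\Dbb_{x_n}$ as the direct sum $\bigoplus$ so that it pairs componentwise with $U_\theta$.

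First, by the definition of the empirical Rademacher complexity and linearity of the inner product,
\begin{align*}
\hat{R}(\mcl{V}_\Theta,x_1,\ldots,x_N)=\frac1N\mr{E}\bigg[\sup_{u_\theta\in\mcl{U}_\Theta}\bigg\langle\sum_{n=1}^N\epsilon_n\Dbb_{x_n},U_\theta\bigg\rangle\bigg]\le\frac1N\sup_{u_\theta}\|U_\theta\|\;\mr{E}\bigg[\bigg\|\sum_{n=1}^N\epsilon_n\Dbb_{x_n}\bigg\|\bigg]
\end{align*}
by Cauchy--Schwarz. Second, by Jensen's inequality and $\mr{E}[\epsilon_n\epsilon_m]=\delta_{nm}$,
\begin{align*}
\mr{E}\bigg[\bigg\|\sum_{n}\epsilon_n\Dbb_{x_n}\bigg\|\bigg]\le\bigg(\sum_n\|\Dbb_{x_n}\|^2\bigg)^{1/2}\le\sqrt{N}F.
\end{align*}
Combining the two gives the inequality. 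The equality is the computation $\|U_\theta\|^2=\sum_{i=0}^r\|u_\theta^{\otimes i}\|^2+\|u_\theta^{\otimes r}\|^2$, using the orthogonality of the direct-sum components and $\|u^{\otimes i}\|=\|u\|^i$ for Hilbert tensor products (Fubini on $\mcl{X}^i$).

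The main obstacle is that $\Dbb_{x_n}$ depends on $\theta$ through the remainder term $(\tilde{\DD}^{\otimes r})^*(P^r)^*S(0,\tilde{\DD}u_\theta(\cdot))^*p_{x_n}$. Because of this, the vector $\sum_n\epsilon_n\Dbb_{x_n}$ cannot be pulled outside the supremum as written. I would handle this by splitting $\Dbb_{x_n}=\Dbb^{0}_{x_n}\oplus\Dbb^{S}_{x_n}(\theta)$ into the $\theta$-independent polynomial part and the remainder part. The polynomial part is treated exactly as above. For the remainder part, I would apply Cauchy--Schwarz inside the supremum and bound the sum uniformly in $\theta$ using $\|S\|\le\epsilon$, so that, as in Remark~\ref{rmk:Dx_bound}, $\|\Dbb^{S}_{x_n}(\theta)\|\le\|\tilde{\DD}\|^r\|P^r\|\epsilon B$. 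A fully rigorous version of this step needs either a crude $\theta$-uniform bound, or the reading (implicit in the paper) that $F$ bounds $\sup_\theta\|\Dbb_{x_n}\|$ and the Rademacher average is taken with the $\theta$-dependence frozen. I would state the result under this interpretation. I would also note that $P^r$ must be bounded on the relevant range, which is implicit in the assumption $\|\Dbb_{x_n}\|\le F$.
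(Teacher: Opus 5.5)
Your main chain (Cauchy--Schwarz in the direct sum, Jensen, $\mr{E}[\epsilon_n\epsilon_m]=\delta_{nm}$, $\|u^{\otimes i}\|=\|u\|^i$, reading the first $\bigotimes$ as $\bigoplus$) is exactly the paper's argument. The paper's proof says only ``in the same manner as Theorem~\ref{thm:linear}''. You are also right that $\Dbb_{x_n}$ depends on $\theta$ through the remainder. So your first display, which pulls $\mr{E}\|\sum_n\epsilon_n\Dbb_{x_n}\|$ outside the supremum, is not valid as written. The paper's argument has the same gap and never addresses it.

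Your repair, however, does not close the gap. Bounding each $\|\Dbb^S_{x_n}(\theta)\|$ separately and applying Cauchy--Schwarz inside the supremum only gives $\sup_\theta\|\sum_n\epsilon_n\Dbb^S_{x_n}(\theta)\|\le N\max_n\sup_\theta\|\Dbb^S_{x_n}(\theta)\|$. After dividing by $N$, the remainder then contributes a term that does not decay in $N$. Freezing $\theta$ inside $\Dbb_{x_n}$ does not help either: it replaces $\hat{R}(\mcl{V}_{\Theta},x_1,\ldots,x_N)$ by a different quantity.

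The missing observation is that the $\theta$-dependent part is the image of $p_{x_n}$ under a single operator independent of $n$. Namely, $\Dbb^S_{x_n}(\theta)=T_\theta p_{x_n}$ with $T_\theta=(\tilde{\DD}^{\otimes r})^*(P^r)^*M_\theta^*$, where $M_\theta w=S(0,\tilde{\DD}u_\theta(\cdot))w(\cdot)$ and $\|M_\theta\|\le\epsilon$. Hence $\sum_n\epsilon_n\Dbb^S_{x_n}(\theta)=T_\theta\sum_n\epsilon_n p_{x_n}$. Set $K:=\sup_\theta\|T_\theta\|$, $F_0:=\max_n\|\Dbb^0_{x_n}\|$, $a_\theta:=\|\bigoplus_{i=0}^r u_\theta^{\otimes i}\|$ and $b_\theta:=\|u_\theta\|^r$. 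Then
\begin{align*}
\hat{R}(\mcl{V}_{\Theta},x_1,\ldots,x_N)&\le\frac{1}{N}\mr{E}\bigg[\bigg\|\sum_{n=1}^N\epsilon_n\Dbb^0_{x_n}\bigg\|\bigg]\sup_\theta a_\theta+\frac{K}{N}\mr{E}\bigg[\bigg\|\sum_{n=1}^N\epsilon_n p_{x_n}\bigg\|\bigg]\sup_\theta b_\theta\\
&\le\frac{F_0\sup_\theta a_\theta+KB\sup_\theta b_\theta}{\sqrt{N}}.
\end{align*}
Both $a_\theta$ and $b_\theta$ are continuous nondecreasing functions of the single number $\|u_\theta\|$. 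So $((\sup_\theta a_\theta)^2+(\sup_\theta b_\theta)^2)^{1/2}=\sup_\theta(a_\theta^2+b_\theta^2)^{1/2}$. Cauchy--Schwarz in $\RR^2$ then gives the stated bound with $F=(F_0^2+K^2B^2)^{1/2}$, which is the $\theta$-free constant Remark~\ref{rmk:Dx_bound} has in mind.

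Note that $K<\infty$ must be imposed as an explicit hypothesis. The per-vector assumption $\|\Dbb_{x_n}\|\le F$ controls only the vectors $T_\theta p_{x_n}$, not $T_\theta$ applied to $\sum_n\epsilon_n p_{x_n}$. Your constant $\|\tilde{\DD}\|^r\|P^r\|\epsilon B$ is not available in the $L^2$ setting either. The diagonal restriction $P^r$ is unbounded for $r\ge 2$; for example, $\|P^2w^{\otimes 2}\|=\|w\|_{L^4}^2$ while $\|w^{\otimes 2}\|=\|w\|_{L^2}^2$.
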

The order $r$ represents the nonlinearity of $g$ in this case.
Thus, Theorem~\ref{thm:nonlinear-linear} shows that as the nonlinearity of $g$ grows, the Rademacher complexity becomes large.
Note that in the same manner as Remark~\ref{rmk:Dx_bound}, the factor $F$ does not depend on $\theta$.


\subsection{Application of Koopman-based bounds}\label{subsec:koopman}
Since Theorems~\ref{thm:linear}, \ref{thm:poly}, and \ref{thm:nonlinear-linear} includes the term $\Vert u_{\theta}\Vert$, we can apply Lemma~\ref{lem:norm_u_theta} to understand the relationship between the Rademacher complexity and the weight matrices.
We set $\mcl{X}_0=\mcl{X}$, $d_L=\tilde{d}$.
Consider the neural network~\eqref{eq:nn}.
Then, we obtain the following bound.

Let $\alpha(h)=({\int_{W_l\mcl{X}_{l-1}} |h(x)|^2 d\mu_{\mcl{R}(W_l)}(x)}/{\int_{\tilde{\mcl{X}}_l}\vert h(x)\vert^2\mr{d}\mu_{\mathbb{R}^{d_l}}(x)})^{1/2}$ for $h\in\tilde{\hil}_l$.
Let $D>0$, $\Theta=\{(W_1,\ldots,W_L)\,\mid\,\vert\det{W_1^*W_1}\vert^{-1/4},\ldots,\vert\det{W_L^*W_L}\vert^{-1/4}\le D\}$.
\begin{theorem}\label{thm:linear_koopman}
Assume $\DD$ is linear.
Let $\Dbb_{x_n}=\D^*p_{x_n}$.
Assume there exists $F>0$ such that for any $n=1,\ldots,N$, $\Vert\Dbb_{x_n}\Vert\le F$.
Then, we have 
\begin{align*}
\hat{R}(\mcl{V}_{\Theta},x_1,\ldots,x_N)\le \frac{F}{\sqrt{N}}\sup_{\theta\in\Theta}\frac{\Vert v\Vert\prod_{l=1}^{L-1}\Vert K_{\sigma_l}\Vert \alpha(f_l)}{\prod_{l=1}^L\vert\det W_l^*W_l\vert^{1/4}}. 
\end{align*}
\end{theorem}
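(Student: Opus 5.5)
The plan is to combine Theorem~\ref{thm:linear} with Lemma~\ref{lem:norm_u_theta}. First I will record the chain of estimates behind Theorem~\ref{thm:linear} for this setting. With $\mcl{X}_0=\mcl{X}$ and $d_L=\tilde{d}$, every $u_\theta$ of the form~\eqref{eq:nn} with $\theta\in\Theta$ lies in $L^2(\mcl{X},\RR^{\tilde{d}})$. Linearity and boundedness of $\DD$ give, for each realization of the Rademacher variables,
\begin{align*}
\sup_{\theta\in\Theta}\sum_{n=1}^N\epsilon_n\bracket{p_{x_n},\DD u_\theta}
=\sup_{\theta\in\Theta}\bigg\langle\sum_{n=1}^N\epsilon_n\Dbb_{x_n},u_\theta\bigg\rangle
\le \sup_{\theta\in\Theta}\Vert u_\theta\Vert\,\bigg\Vert\sum_{n=1}^N\epsilon_n\Dbb_{x_n}\bigg\Vert .
\end{align*}
The supremum over $\theta$ now sits only on $\Vert u_\theta\Vert$. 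That factor is deterministic, so it can be taken outside the expectation.

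The remaining random factor is handled by Jensen's inequality and the independence of the $\epsilon_n$:
\begin{align*}
\mr{E}\bigg\Vert\sum_n\epsilon_n\Dbb_{x_n}\bigg\Vert\le\bigg(\sum_{n}\Vert\Dbb_{x_n}\Vert^2\bigg)^{1/2}\le F\sqrt{N}.
\end{align*}
The cross terms vanish because $\mr{E}[\epsilon_n\epsilon_m]=\delta_{nm}$. Dividing by $N$ recovers Theorem~\ref{thm:linear} with $\mcl{U}_\Theta$ indexed by $\Theta$, namely $\hat{R}(\mcl{V}_{\Theta},x_1,\ldots,x_N)\le (F/\sqrt{N})\sup_{\theta\in\Theta}\Vert u_\theta\Vert$. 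In practice I will simply invoke Theorem~\ref{thm:linear} at this point.

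Next I will apply Lemma~\ref{lem:norm_u_theta} to each $\theta\in\Theta$ inside the supremum. This gives $\Vert u_\theta\Vert\le \Vert v\Vert\prod_{l=1}^{L-1}\Vert K_{\sigma_l}\Vert\alpha(f_l)/\prod_{l=1}^L\vert\det W_l^*W_l\vert^{1/4}$. The inequality is preserved when we take suprema, which yields the claimed bound. Membership in $\Theta$ ensures $\det W_l^*W_l\neq 0$, so the right-hand side is finite, and the standing assumption that each $K_{\sigma_l}$ is bounded makes the factors well defined.

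The main obstacle is checking that Lemma~\ref{lem:norm_u_theta} applies in the present setting. Its $L^2$ norm must be the norm on $L^2(\mcl{X},\RR^{\tilde{d}})$ that appears in the Cauchy--Schwarz step, which holds after the identification $\mcl{X}_0=\mcl{X}$. The domains $\tilde{\mcl{X}}_l,\mcl{X}_l$ must also have positive measure, as assumed in the preliminaries. A second point is that Theorem~\ref{thm:linear} presupposes $u_\theta\in\mcl{F}$ so that $\DD u_\theta$ is defined. I would handle this by restricting $\mcl{U}_\Theta$ to those networks, or by noting that $\bracket{\Dbb_{x_n},u_\theta}$ is defined on all of $L^2$. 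Once these identifications are checked, the proof reduces to composing the two earlier results.
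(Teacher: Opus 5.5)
Your proposal is correct and follows the paper's route. The paper gets this theorem by combining Theorem~\ref{thm:linear} with Lemma~\ref{lem:norm_u_theta} applied inside the supremum over $\Theta$, where Theorem~\ref{thm:linear} is proved in the appendix via Cauchy--Schwarz, Jensen, and $\mr{E}[\epsilon_n\epsilon_m]=\delta_{nm}$. Your Jensen step, $\mr{E}\Vert\sum_n\epsilon_n\Dbb_{x_n}\Vert\le(\sum_n\Vert\Dbb_{x_n}\Vert^2)^{1/2}$, is written as an inequality, which is slightly more careful than the appendix, where that step appears as an equality.
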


\begin{theorem}\label{thm:poly_koopman}
Assume $\DD$ is in the form~\eqref{eq:D_poly}.
Let $\Dbb_{x_n}=\bigoplus_{i=0}^{r} \frac{1}{i!}\DD^{(i)}(0)^{*}p_{x_n}
    \oplus S(0,u_{\theta})^{*}p_{x_n}\oplus p_{x_n}$.
Assume there exists $\epsilon>0$ such that $\Vert S(0,u_{\theta})\Vert\le \epsilon$.
Assume, in addition, there exists $F>0$ such that for any $n=1,\ldots,N$, $\Vert\Dbb_{x_n}\Vert\le F$.
Then, we have 
\begin{align*}
&\hat{R}(\mcl{V}_{\Theta},x_1,\ldots,x_N)\\
&\le\frac{F}{\sqrt{N}}\sup_{\theta\in{\Theta}}
      \bigg(\sum_{i=0}^r\bigg(\frac{\Vert v\Vert\prod_{l=1}^{L-1}\Vert K_{\sigma_l}\Vert \alpha(f_l)}{\prod_{l=1}^L\vert\det W_l^*W_l\vert^{1/4}}\bigg)^{2i} +\bigg(\frac{\Vert v\Vert\prod_{l=1}^{L-1}\Vert K_{\sigma_l}\Vert \alpha(f_l)}{\prod_{l=1}^L\vert\det W_l^*W_l\vert^{1/4}}\bigg)^{2r}\\
      &\qquad\qquad\qquad\qquad\qquad\qquad + \bigg(\frac{\Vert g\circ v\Vert\prod_{l=1}^{L-1}\Vert K_{\sigma_l}\Vert \alpha(f_l)}{\prod_{l=1}^L\vert\det W_l^*W_l\vert^{1/4}}\bigg)^2\bigg)^{1/2}.
\end{align*}
\end{theorem}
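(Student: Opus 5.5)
The plan is to start from Theorem~\ref{thm:poly} and then bound each term inside the supremum using Lemma~\ref{lem:norm_u_theta} applied to the network~\eqref{eq:nn}. With $\mcl{X}_0=\mcl{X}$ and $d_L=\tilde{d}$, Theorem~\ref{thm:poly} gives
\begin{align*}
\hat{R}(\mcl{V}_{\Theta},x_1,\ldots,x_N)\le\frac{F}{\sqrt{N}}\sup_{\theta\in\Theta}\bigg(\sum_{i=0}^r\Vert u_\theta\Vert^{2i}+\Vert u_\theta\Vert^{2r}+\Vert g\circ u_\theta\Vert^2\bigg)^{1/2}.
\end{align*}
The hypotheses match exactly: $\Dbb_{x_n}$ is the same object, the remainder bound is $\Vert S(0,u_\theta)\Vert\le\epsilon$, and $\Vert\Dbb_{x_n}\Vert\le F$. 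Since $\Theta$ is now the determinant-constrained parameter set, the supremum over $\mcl{U}_\Theta$ becomes a supremum over $\theta\in\Theta$.

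Next I would set $A_\theta=\Vert v\Vert\prod_{l=1}^{L-1}\Vert K_{\sigma_l}\Vert\alpha(f_l)/\prod_{l=1}^L\vert\det W_l^*W_l\vert^{1/4}$. Lemma~\ref{lem:norm_u_theta} gives $\Vert u_\theta\Vert\le A_\theta$. The maps $t\mapsto t^{2i}$ and $t\mapsto t^{2r}$ are monotone on $[0,\infty)$, so I can substitute $A_\theta$ termwise. The $i=0$ term is constant (equal to $1$) in both expressions, so it causes no issue.

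For the term $g\circ u_\theta$, I observe that $g\circ u_\theta=(g\circ v)\circ W_L\circ\sigma_{L-1}\circ\cdots\circ\sigma_1\circ W_1$. This is again a network of the form~\eqref{eq:nn}, with the final nonlinearity $v$ replaced by $g\circ v$. Applying Lemma~\ref{lem:norm_u_theta} to this network gives the bound with $\Vert g\circ v\Vert$ in place of $\Vert v\Vert$, which is the third term of the claimed bound.

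The main obstacle is this last step. It requires $g\circ v\in L^2(\tilde{\mcl{X}}_L,\RR^{\tilde d})$, for instance if $g$ is continuous and $v$ is bounded on the bounded domain. The subtlety is that the factors $\alpha(f_l)$ are defined through $f_l$, which contains $v$. For the network ending in $g\circ v$, the corresponding factors would be $\alpha(g\circ f_l)$, not $\alpha(f_l)$. To obtain the statement as written, I would either read $\alpha(f_l)$ in the third term as computed for the modified network $f_l$ with $v\to g\circ v$ (the natural reading), or state this identification explicitly. Apart from this point, the proof is a direct combination of Theorem~\ref{thm:poly} and Lemma~\ref{lem:norm_u_theta} via monotonicity.
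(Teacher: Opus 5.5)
Your proposal is correct and follows the paper's own route: the paper obtains this theorem by substituting the bound of Lemma~\ref{lem:norm_u_theta} termwise into Theorem~\ref{thm:poly}, treating $g\circ u_\theta$ as the same network with final layer $g\circ v$. Your caveat is also accurate: that last step strictly produces the factors $\alpha(g\circ f_l)$ rather than $\alpha(f_l)$, and it requires $g\circ v\in L^2$; the paper's statement makes this identification without comment.
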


\begin{theorem}\label{thm:nonlinear_linear_koopman}
Assume $\DD$ is in the form~\eqref{eq:D_nonlinear_liner}.
Let 
$$\Dbb_{x_n}=\bigotimes_{i=0}^r1/(i!)(\tilde{\DD}^{\otimes i})^*(P^i)^*g^{(i)}(0)^*p_{x_n}\oplus (\tilde{\DD}^{\otimes r})^*(P^r)^*S(0,\tilde{\DD}u_{\theta}(\cdot))^*p_{x_n}(\cdot).$$
Assume there exists $\epsilon>0$ such that $\Vert S(0,\tilde{\mcl{D}}u_{\theta})\Vert\le \epsilon$.
Assume, in addition, there exists $F>0$ such that for any $n=1,\ldots,N$, $\Vert\Dbb_{x_n}\Vert\le F$.
Then, we have 
\begin{align*}
&\hat{R}(\mcl{V}_{\Theta},x_1,\ldots,x_N)\\
&\le\frac{F}{\sqrt{N}}\sup_{\theta\in{\Theta}}
      \bigg(\sum_{i=0}^r\bigg(\frac{\Vert v\Vert\prod_{l=1}^{L-1}\Vert K_{\sigma_l}\Vert \alpha(f_l)}{\prod_{l=1}^L\vert\det W_l^*W_l\vert^{1/4}}\bigg)^{2i} +\bigg(\frac{\Vert v\Vert\prod_{l=1}^{L-1}\Vert K_{\sigma_l}\Vert \alpha(f_l)}{\prod_{l=1}^L\vert\det W_l^*W_l\vert^{1/4}}\bigg)^{2r}\bigg)^{1/2}.
\end{align*}
\end{theorem}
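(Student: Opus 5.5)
Theorem~\ref{thm:nonlinear_linear_koopman} follows by composing Theorem~\ref{thm:nonlinear-linear} with Lemma~\ref{lem:norm_u_theta}, as in Theorems~\ref{thm:linear_koopman} and~\ref{thm:poly_koopman}. The plan is to first obtain the intermediate bound of Theorem~\ref{thm:nonlinear-linear}, then replace each power $\Vert u_\theta\Vert^{2i}$ by the corresponding power of the Koopman-based bound on $\Vert u_\theta\Vert$, using that $t\mapsto t^{2i}$ is nondecreasing on $[0,\infty)$.

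\emph{Step 1 (reduce to Theorem~\ref{thm:nonlinear-linear}).} Write $N\hat{R}(\mcl{V}_\Theta,x_1,\ldots,x_N)=\mr{E}[\sup_\theta\sum_n\epsilon_n\langle p_{x_n},\DD(u_\theta)\rangle]$. Apply the Taylor expansion of $g$ pointwise and insert the operators $P^i$ and $\tilde{\DD}^{\otimes i}$ to rewrite the sum as
\[
\Big\langle \sum_n\epsilon_n\Dbb_{x_n},\ \bigoplus_{i=0}^r u_\theta^{\otimes i}\oplus u_\theta^{\otimes r}\Big\rangle .
\]
Then apply Cauchy--Schwarz. Jensen's inequality, together with the orthogonality $\mr{E}[\epsilon_n\epsilon_m]=\delta_{nm}$, gives $\mr{E}\Vert\sum_n\epsilon_n\Dbb_{x_n}\Vert\le(\sum_n\Vert\Dbb_{x_n}\Vert^2)^{1/2}\le F\sqrt{N}$. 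Finally, use $\Vert u^{\otimes i}\Vert=\Vert u\Vert^i$ in the Hilbert tensor product and the orthogonality of the direct sum to get the squared norm $\sum_{i=0}^r\Vert u_\theta\Vert^{2i}+\Vert u_\theta\Vert^{2r}$. All of this is exactly Theorem~\ref{thm:nonlinear-linear}, which I take as given.

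\emph{Step 2 (main obstacle).} The vector $\Dbb_{x_n}$ contains the remainder term $S(0,\tilde{\DD}u_\theta(\cdot))^*p_{x_n}$, so it depends on $\theta$. This means it cannot simply be pulled outside the supremum. I would handle this as in Remark~\ref{rmk:Dx_bound}. First, bound the norm of the $\theta$-dependent component by $\Vert\tilde{\DD}^{\otimes r}\Vert\,\Vert P^r\Vert\,\epsilon\Vert p_{x_n}\Vert$, which is uniform in $\theta$. Then interpret the hypothesis $\Vert\Dbb_{x_n}\Vert\le F$ as holding uniformly over $\theta\in\Theta$. With this, first bound the supremum over $\theta$ of the inner product by a supremum of products. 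Next, control $\sup_\theta\Vert\sum_n\epsilon_n\Dbb_{x_n}^\theta\Vert$ using $\Vert\sum_n\epsilon_n a_n\Vert\le(\sum_n\Vert a_n\Vert^2)^{1/2}$; this requires care, so alternatively split off the remainder component and bound it deterministically using $|\epsilon_n|=1$. I also need $P^i$ to be bounded; the cleanest way is to treat it abstractly as part of the assumption $\Vert\Dbb_{x_n}\Vert\le F$, just as the paper does.

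\emph{Step 3 (insert the Koopman bound).} Set $\mcl{X}_0=\mcl{X}$ and $d_L=\tilde{d}$, and let $u_\theta$ be the network~\eqref{eq:nn}. Lemma~\ref{lem:norm_u_theta} gives
\[
\Vert u_\theta\Vert\le \frac{\Vert v\Vert\prod_{l=1}^{L-1}\Vert K_{\sigma_l}\Vert\alpha(f_l)}{\prod_{l=1}^L\vert\det W_l^*W_l\vert^{1/4}}.
\]
Raising both sides to the nonnegative powers $2i$ and $2r$ preserves the inequality termwise, and the square root is monotone, so substituting this into the bound of Step~1 and taking $\sup_{\theta\in\Theta}$ gives the stated inequality. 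The $\Theta$ defined with the determinant constraint $D$ only ensures that the right-hand side is finite; it is not otherwise used.
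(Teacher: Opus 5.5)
Your proposal matches the paper's argument: the paper obtains this theorem by inserting the bound of Lemma~\ref{lem:norm_u_theta} on $\Vert u_\theta\Vert$ into Theorem~\ref{thm:nonlinear-linear}, with monotonicity of $t\mapsto t^{2i}$ and of the square root justifying the substitution, which is exactly your Step~3. Since you take Theorem~\ref{thm:nonlinear-linear} as given, your Step~2 is not needed, and the fixes sketched there would not work if it were: $\Vert\sum_n\epsilon_n a_n\Vert\le(\sum_n\Vert a_n\Vert^2)^{1/2}$ fails pointwise (for example, for equal $a_n$ and all $\epsilon_n=1$), and bounding the $\theta$-dependent remainder component deterministically leaves a term proportional to $\epsilon\sup_\theta\Vert u_\theta\Vert^r$ without the $1/\sqrt{N}$ factor---a subtlety the paper's own derivation of Theorem~\ref{thm:nonlinear-linear} also passes over.
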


Since the determinants of weight matrices appear in the denominator of the bound, the bound implies that similar to the standard neural networks, for neural networks with differential operators, high-rank weight matrices such as unitary matrices can also make the generalization error small.
Regarding the factor $\Vert K_{\sigma_l}\Vert$, we can apply Lemmas~\ref{lem:koopman_bounded} and \ref{lem:sigmoid_tanh} to obtain an upper bound.

\begin{remark}
Let $p_x$ be the same function as in Example~\ref{ex:bump_function}.
Then, we have $\Vert p_{x_n}\Vert\to\infty$ as $c\to\infty$.
Thus, the constant $F$ depends on $c$ and $F\to\infty$ as $c\to\infty$.
Note that $F$ does not depend on the weight matrices (see Remark~\ref{rmk:Dx_bound}).
Thus, we can expect that the above bounds also captures the behavior of the standard PINN.
We numerically confirm the applicability of our results to the standard PINNs in Section~\ref{sec:numexp}.
\end{remark}
\begin{remark}\label{rmk:PINN}
Since the variational model $V_{\theta}$ goes to the standard PINN as $c\to\infty$, the Rademacher complexity bound diverges as the model goes to the standard PINNs.
This implies the advantage of VPINNs over the standard PINNs in the sense of the generalization.
We confirm this in Subsection~\ref{subsec:ns}.
\end{remark}

\section{Numerical results}\label{sec:numexp}
We numerically confirm the validity of our bound in this section.
In Subsection~\ref{subsec:ns}, we consider a regularization based on our bound and show its effectiveness.
In Subsection~\ref{subsec:monge}, we numerically investigate the impact of the nonlinearity $r$ on our bound.
All the experiments in this paper were executed with Python 3.10 on an Intel(R) Core(TM) i9-10885H
2.4GHz processor with the Windows 11 operating system.

\subsection{Navier--Stokes equation}\label{subsec:ns}
We consider the 2D steady incompressible Navier--Stokes equations on the unit square domain $\mcl{X} = [0,1]^2$, with the Reynolds number number $\rm{Re} = 100$. 
\begin{align*}
(u\cdot\nabla) u+\nabla p-\frac{1}{\mathrm{Re}}\Delta u&=0,\qquad \nabla u=0, 
\end{align*}
where $u\in H^2(\mcl{X})$ denotes the velocity field and $p\in H^1(\mcl{X})$ denotes the pressure.
The boundary condition is $u(x)  = [1,0]$ for $x\in\{[x_1,1]\,\mid\,x_1\in [0,1]\}$ and $u(x)=[0,0]$ otherwise.
The velocity--pressure pair $[u,p]$ is approximated by a single
fully connected network $u_\theta : \mathbb{R}^{2} \to \mathbb{R}^{3}$ defined as $u_{\theta}(x)=W_4\sigma_3(W_3\sigma_2(W_2\sigma_1(W_1x+b_1)+b_2)+b_3)$, where $W_1\in\RR^{64\times 2}$, $W_2,W_3\in\RR^{64\times 64}$, and $W_4\in\RR^{3\times 64}$.
We set $\sigma_1,\sigma_2,\sigma_3$ as the pointwise hyperbolic tangent activation function.
The two-dimensional input is normalized to $[-1,1]^{2}$ via
$\tilde{{x}} = 2{x} - {1}$.
Weights are initialized with the Glorot uniform scheme~\cite{glorot2010understanding}.
The weak residual is
approximated by midpoint-rule quadrature on a $25 \times 25$ grid
over $\mcl{X}$.
Quadrature nodes are placed at cell centers to avoid boundary points.
We set $p_{x_n}$ as in Example~\ref{ex:bump_function} with $c=0.1$.
The total training objective is
\begin{equation*}
    \mathcal{L}
    = \mathcal{L}_{\mathrm{vpinn}}
    + \lambda_{\mathrm{bc}}\,\mathcal{L}_{\mathrm{BC}}
    + \lambda_{p}\,\mathcal{L}_{p}
    + \lambda_{K}\,\mathcal{L}_{K},
    \label{eq:total_loss}
\end{equation*}
where each component is defined as follows.
The VPINN residual $\mathcal{L}_{\mathrm{vpinn}}$ is set as Eq.~\eqref{eq:vpinn_res} with $\mcl{L}_n(z)=z^2$.
Regarding the boundary-condition loss $\mathcal{L}_{\mathrm{BC}}$,
boundary conditions are enforced via a penalty term:
\begin{equation*}
    \mathcal{L}_{\mathrm{BC}}
    = \frac{1}{N_{\opn{BC}}}\sum_{i=1}^{N_{\opn{BC}}}
      \Vert u_{\theta}({x_i}) - u_{b,i}\Vert^{2},
    \qquad \lambda_{\mathrm{bc}} = 0.1,
\end{equation*}
where $u_{b,i}$ represents the boundary condition at $x_i\in\partial \mcl{X}$.
We used $N_{\opn{BC}}=240$ points on $\partial\mcl{X}$.
About the pressure reference $\mathcal{L}_{p}$,
since the pressure is determined only up to an additive constant in
incompressible flow, we impose the soft constraint $p(0.5,\,0.5)=0$ by setting
\begin{equation*}
    \mathcal{L}_{p}
    = p(0.5,\,0.5)^{2},
    \qquad \lambda_{p} = 0.1.
\end{equation*}
We set the number $N$ of training points as $N=100$.
We consider a regularization based on the bound in Theorem~\ref{thm:poly_koopman}. 
We computed $\tilde{A}_l$ and $D_l$ based on Appendix~\ref{ap:regularization}.
We added a regularization term $0.01\tilde{A}_l/D_l^{1/2}$ to make the factor $\Vert K_{\sigma_l}\Vert/\vert \det W_l^*W_l\vert^{1/4}$ in the bound in Theorem~\ref{thm:poly_koopman} small.
We compared the test loss $\mcl{L}_{\opn{vPINN}}$ with and without the regularization.
We used the Adam optimizer with learning rate $10^{-3}$.
The results are illustrated in Figure~\ref{fig:ns} (a).
We can see that with the regularization, the test loss becomes smaller than without the regularization, which implies the validity of our bound.

To confirm the applicability of our bound to the standard PINN (the case of $c\to\infty$), we also applied the above regularization to the PINN residual $\mcl{L}_{\opn{PINN}}:=1/N\sum_{n=1}^N\mcl{L}_n(\mcl{D}(u_{\theta})(x_n))$.
The results are illustrated in Figure~\ref{fig:ns} (b).
We can see that even for the standard PINN, with the regularization, the test loss becomes smaller than without the regularization, which shows that our bound is also valid for the standard PINNs as we discussed in Remark~\ref{rmk:PINN}.

\begin{figure}[t]
\begin{center}
\subfigure[]{\includegraphics[scale=0.3]{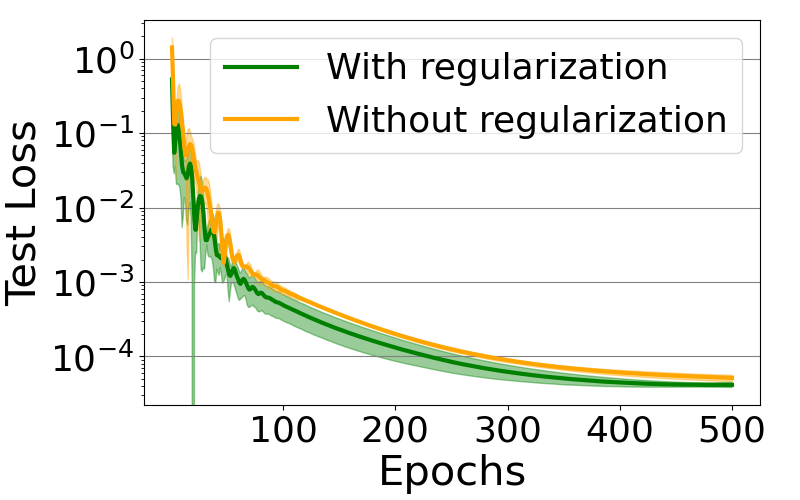}} 
\subfigure[]{\includegraphics[scale=0.3]{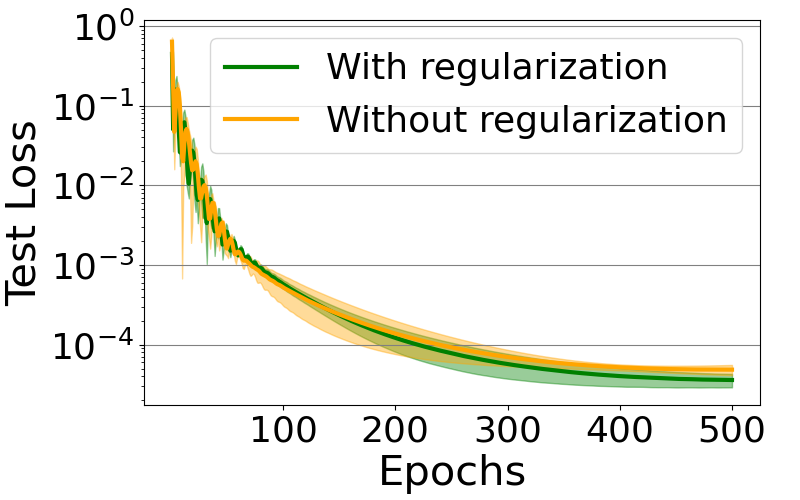}} 
\end{center}\vspace{-.4cm}
\caption{(a) Test loss $\mcl{L}_{\opn{VPINN}}$ with and without the regularization based on Therem~\ref{thm:poly_koopman} (Average $\pm$ standard deviation of three independent runs). (b) Test loss $\mcl{L}_{\opn{PINN}}$ with and without the regularization based on Therem~\ref{thm:poly_koopman} (Average $\pm$ standard deviation of three independent runs).}\label{fig:ns}
\end{figure}

\subsection{Monge-Amp\`{e}re equation}\label{subsec:monge}
We consider the 2D parabolic Monge--Amp\`{e}re
(PMA) equation on $\mcl{X}=[0,1]^2\times [0,1]$ studied in \cite{Gong2020}, which is defined as
\begin{equation*}
    -u_t + \log\det D^2 u = f,
\end{equation*}
where $D^2 u$ denotes the spatial Hessian of $u$. 
The boundary condition is $u(x,t) = [1,0]$ for $x\in\{[x_1,1]\,\mid\,x_1\in [0,1]\}$, $t\in [0,1]$ and $u(x,t)=[0,0]$ otherwise.
We set the same architecture for the neural network as Subsection~\ref{subsec:ns}, but the input dimension is $3$ since the equation in this case depends also on time.
We used the same loss as in Subsection~\ref{subsec:ns} but we do not need the loss $\mcl{L}_p$ in this case.
We set $f(x,y)=1 + \log 2 + 2x$ and consider the same regularization based on our bound.
We set the number $N$ of training points as $N=100$.
We used $N_{\opn{BC}}=720$ points on $\partial\mcl{X}$ to compute $\mcl{L}_{\opn{BC}}$ for the boundary condition.
We used the Adam optimizer with learning rate $10^{-3}$.
Figure~\ref{fig:scatter} shows the scatter plot of the test error versus  $\tilde{A}_l/D_l^{1/2}$.
Since $\DD$ is not polynomial in this case, $A_l$ in Theorem~\ref{thm:nonlinear_linear_koopman} can become larger.
We can see that the relationship between the test error and  $\tilde{A_l}/D_l^{1/2}$ is not linear, and described by a high-order polynomial.
Indeed, Table ~\ref{tab:corr} shows the correlation coefficient between the test error and  $(\tilde{A}_l/D_l^{1/2})^r$.
We can see that the correlation coefficient becomes larger as $r$ becomes larger, which corresponds to the fact that the differential operator is described by the log function.

\begin{figure}[t]
\begin{minipage}[t]{0.5\linewidth}
    \centering
    \includegraphics[scale=0.2]{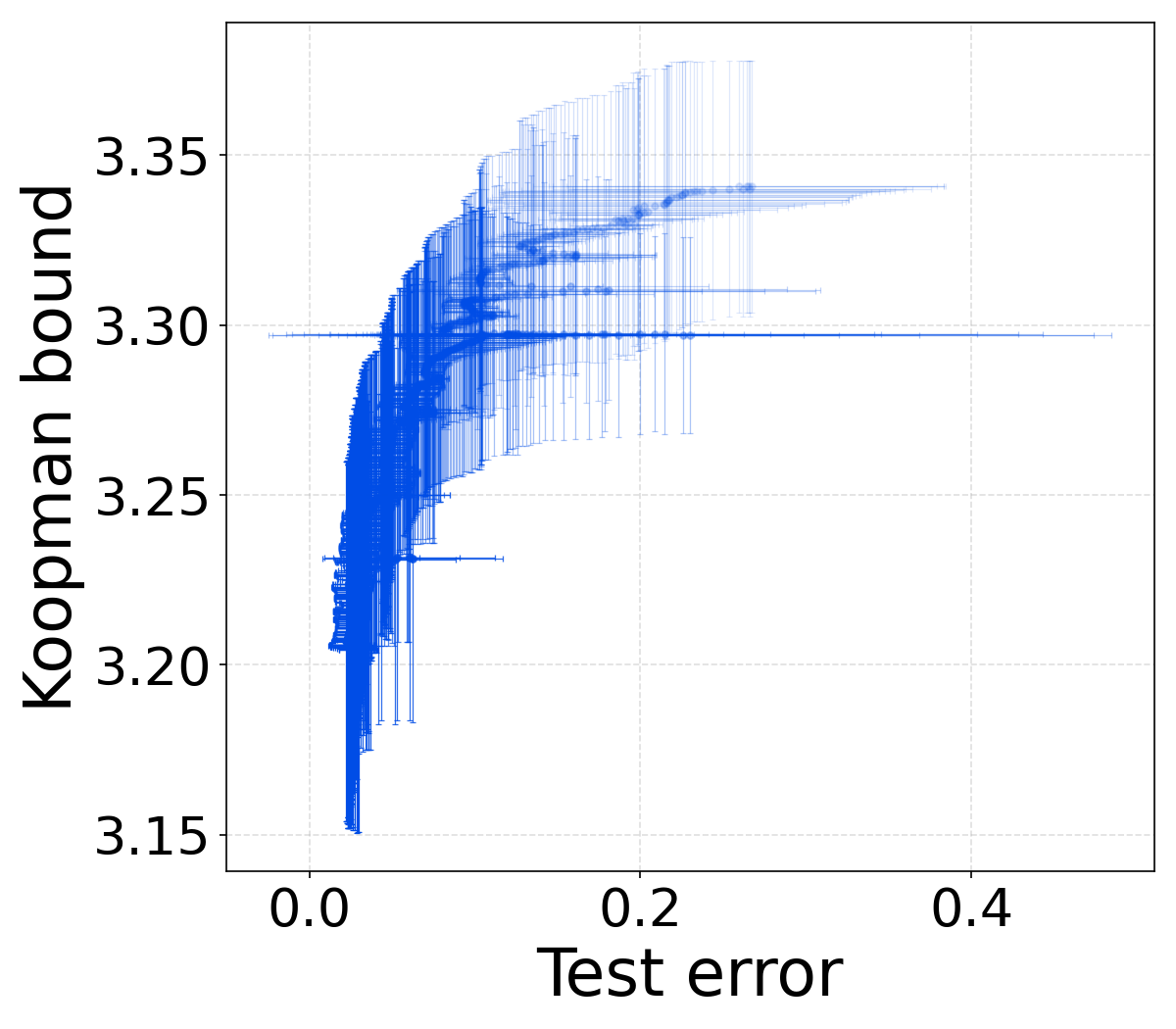}\vspace{-.3cm}
    \caption{Scatter plot of the test error versus  $\tilde{A}_l/D_l^{1/2}$ (Average of 3 independent runs).}
    \label{fig:scatter}
\end{minipage}\quad
\begin{minipage}[t]{0.48\linewidth}
    \centering
    \vspace{-2.5cm}
    \captionof{table}{: Correlation coefficient between the test error and  $(\tilde{A}/D_l^{1/2})^r$.}
    \label{tab:corr}
    \begin{tabular}{c|c}
        $r$ &  correlation coefficient\\
        \hline
         1& 0.8794\\
         2 & 0.8842\\
         3 & 0.8882
    \end{tabular}
    
\end{minipage}
\end{figure}


\section{Conclusion and limitation}\label{sec:conclusion}
In this paper, we derived a new generalization bound for neural networks that involve differential operators with respect to input variables.
The proposed bound allows us to apply the Koopman-based approach, which shows that high-rank networks can generalize well even in settings involving differential operators.
It also shows that the nonlinearity of the differential operator exponentially enlarges the bound.

Our bound is a uniform bound that does not take the learning algorithm into account.
Deriving algorithm-dependent bounds using Koopman-based approach is left for future work. 

\section*{Acknowledgment}
This work was supported by JST ASPIRE JPMJAP2329.

\bibliography{koopmanbib}
\bibliographystyle{plain}

\clearpage
\appendix
\section*{Appendix}

\section{Proofs}\label{ap:proofs}

\begin{proof}[Proof of Theorem~\ref{thm:linear}]
By the Cauchy--Schwartz inequality and the Jensen's inequality, we have 
\begin{align*}
  &\EE\bigg[\sup_{u_{\theta}\in\mcl{U}_{\Theta}}\frac{1}{N}\sum_{n=1}^{N}
      \langle p_{x_n},\DD(u_{\theta})\rangle\,\epsilon_n\bigg]
  \le \frac1N\EE\bigg[\sup_{u_{\theta}\in\mcl{U}_{\Theta}}\|u_{\theta}\|\,\bigg\|\sum_{n=1}^{N} \mathcal{D}^* p_{x_n}\epsilon_n\bigg\|
      \bigg]\\
  &= \frac{1}{N}\sup_{u_{\theta}\in\mcl{U}_{\Theta}}
      \|u_{\theta}\|\EE\bigg[
        \bigg(\sum_{n,m=1}^{N}
          \langle\Dbb_{x_n}\epsilon_n,\,\Dbb_{x_m}\epsilon_m\rangle
        \bigg)^{1/2}\bigg]
      \\
  &= \frac{1}{N}\sup_{u_{\theta}\in\mcl{U}_{\Theta}}
      \|u_{\theta}\|\EE\bigg(\bigg[
        \sum_{n,m=1}^{N}
          \langle\Dbb_{x_n}\epsilon_n,\,\Dbb_{x_m}\epsilon_m\rangle
        \bigg]\bigg)^{1/2}
      \\
  &= \frac{1}{N}\sup_{u_{\theta}\in\mcl{U}_{\Theta}}
      \|u_{\theta}\|\bigg(
        \sum_{n,m=1}^{N}
          \langle\Dbb_{x_n},\,\Dbb_{x_m}\rangle\,
          \EE[\epsilon_n\epsilon_m]
        \bigg)^{1/2} \\
  &= \frac{1}{N}\sup_{u_{\theta}\in\mcl{U}_{\Theta}}
      \|u_{\theta}\|\bigg(
        \sum_{n=1}^{N}
          \langle\Dbb_{x_n},\,\Dbb_{x_n}\rangle
        \bigg)^{1/2} 
    \le \frac{F}{\sqrt{N}}\sup_{u_{\theta}\in\mcl{U}_{\Theta}}
      \|u_{\theta}\|.
\end{align*}
\end{proof}

\section{Regularization based on the proposed bound}\label{ap:regularization}
We consider a regularization based on the bounds in theorems in Subsection~\ref{subsec:koopman}. 
We set the space of pre-activation vectors as $\tilde{\mcl{X}}_l=\Vert W_l\Vert [-1,1]^{d_{l-1}}+\Vert b_l\Vert_{\infty}$ for $l=1,\ldots,4$, where $d_0=d$ and $d_l$ is the output dimension of the $l$th layer for $l=1,\ldots,3$.
According to Lemma~\ref{lem:sigmoid_tanh}, we computed the upper bound $A_l:=\sup_{x\in {\sigma_l(\tilde{\mcl{X}_j})}} \vert J\sigma^{-1}(x)\vert$ of $\Vert K_{\sigma_l}\Vert^2$.
To avoid that the upper bound becomes too large, we computed $\tilde{A}_l:=1/(1+1/A_l)$ instead of $A_l$.
The factor $\tilde{A}_l$ increases monotonically  as $A_l$ increases but is bounded.
Regarding the factor $(\det W_l^*W_l)^{1/4}$, since $(\det W_l^*W_l)^{1/2}$ is described by the product of the singular values of $W_l$, we consider the singular values of $W_l$.
Let $\sigma_1 \geq \cdots \geq \sigma_R > 0$ be the singular values
of $W_l$, where $R$ is the rank of $W_l$.  
The geometric mean
\begin{align*}
    D_l
    = \Bigl(\prod_{k=1}^{R}\sigma_k\Bigr)^{1/R}
    = \vert \det W_l\bigr\vert^{1/R}
    = \exp\!\bigg(\frac{1}{R}\sum_{k=1}^{R}\ln\sigma_k\bigg)
    \label{eq:Dl}
\end{align*}
is computed in log-space for numerical stability with an offset
$10^{-8}$ before taking logarithms.
By adding a regularization term $0.01\tilde{A}_l/D_l^{1/2}$, we can make the factor $\Vert K_{\sigma_l}\Vert/\vert \det W_l^*W_l\vert^{1/4}$ in the bound in Theorem~\ref{thm:poly_koopman} small.

\end{document}